\documentclass{article}

\usepackage[preprint]{neurips_2026}

\usepackage[utf8]{inputenc}
\usepackage[T1]{fontenc}

\usepackage{amsmath,amssymb,amsfonts,bm,amsthm}

\usepackage{graphicx}
\usepackage{subfigure} 
\usepackage{wrapfig}
\usepackage{stfloats}

\usepackage{booktabs}
\usepackage{array}
\usepackage{multirow}
\usepackage{threeparttable}
\usepackage{colortbl}

\usepackage{algorithm}
\usepackage[noEnd=false,indLines=true]{algpseudocodex}

\usepackage{enumitem}
\usepackage{nicefrac}
\usepackage{textcomp}
\usepackage{verbatim}
\usepackage{comment}

\usepackage[table,dvipsnames]{xcolor}
\usepackage{pifont}

\usepackage{url}
\usepackage{hyperref}

\definecolor{rciColor}{RGB}{230,240,255}
\definecolor{gaiColor}{RGB}{235,247,237}
\definecolor{finalColor}{RGB}{255,243,230}
\definecolor{AlgoCodeBlue}{RGB}{45,82,115}
\definecolor{AlgoCommentBlue}{RGB}{30,75,125}
\definecolor{mypink}{rgb}{.99,.91,.95}

\newtheorem{theorem}{Theorem}[section]
\newtheorem{proposition}[theorem]{Proposition}

\newtheorem{definition}[theorem]{Definition}

\newcommand{\stdvu}[1]{\scriptsize{\color{darkgray}$(\pm{#1})$}\,{\color{ForestGreen}$\uparrow$}}
\newcommand{\stdvd}[1]{\scriptsize{\color{darkgray}$(\pm{#1})$}\,{\color{red}$\downarrow$}}
\newcommand{\stdvw}[1]{\scriptsize{\color{darkgray}$(\pm{#1})$}\,\hphantom{$\downarrow$}}

\title{FedSmoothLoRA: Toward Smoother and Faster Convergence in Federated Low-Rank Adaptation}

\author{
\textbf{Zehao Wang}$^{1}$ \quad
\textbf{Guanglei Yang}$^{1}$ \quad
\textbf{Yihan Zeng}$^{2}$ \quad
\textbf{Hang Xu}$^{2}$ \\
\textbf{Hongzhi Zhang}$^{1}$ \quad
\textbf{Wangmeng Zuo}$^{1}$ \quad
\textbf{Chun-Mei Feng}$^{3}$ \\
$^{1}$Harbin Institute of Technology \\
$^{2}$Huawei Noah's Ark Lab \\
$^{3}$University College Dublin
}
\begin{document}

\maketitle

\begin{abstract}
Federated fine-tuning of foundation models with Low-Rank Adaptation (LoRA) provides an efficient solution for reducing communication and computation costs while preserving data locality.
However, the direct combination of FedAvg and LoRA suffers from three key issues: limited update space, which restricts the model's effective learning capacity; inter-round state mismatch, which disrupts cross-round local optimization continuity; and a client-agnostic starting state, which slows local convergence on clients.
Although recent methods mitigate the limited update space issue by merging LoRA updates into the backbone across communication rounds, inter-round state mismatch and the client-agnostic starting state remain insufficiently addressed.
To address these issues, we propose \textbf{FedSmoothLoRA}, a federated LoRA tuning framework that preserves the enlarged update space, improves cross-round local optimization continuity, and provides a client-aware starting state for local training.
At each communication round, FedSmoothLoRA constructs the local LoRA initialization using two matrices: a Round-Matching matrix that preserves cross-round local state continuity, and a Gradient-Aligned matrix that provides client-specific optimization guidance from gradient signals estimated on local data.
Together, these designs enable smoother and faster convergence.
Extensive experiments on image classification and natural language generation tasks demonstrate that FedSmoothLoRA consistently outperforms existing federated LoRA tuning methods. \noindent\textbf{Code:} \url{https://github.com/wangzehao0704/FedSmoothLoRA}

\end{abstract}
\section{Introduction}
\label{sec:intro}

Foundation models have advanced rapidly across vision, language, and multimodal tasks, with representative examples including CLIP~\cite{radford2021learning}, LLaMA~\cite{touvron2023llama,touvron2023llama2,dubey2024llama3}, LLaVA~\cite{liu2024visual}, and Qwen~\cite{bai2023qwen}. 
However, fine-tuning these models on downstream or domain-specific data typically requires substantial computation, memory, and communication costs, while the data are often private and distributed across clients~\cite{achiam2023gpt,touvron2023llama,ye2024openfedllm}.
Federated Learning (FL)~\cite{mcmahan2017communication}, combined with Low-Rank Adaptation (LoRA)~\cite{hu2021lora}, provides a practical solution by enabling distributed fine-tuning with reduced computation and communication costs while preserving data locality.
A natural instantiation of this paradigm is to directly combine FedAvg~\cite{mcmahan2017communication} with LoRA, which we refer to as \textbf{FedAvgLoRA}~\cite{zhang2024towards,zhang2023fedpetuning,ye2024openfedllm,kuang2024federatedscope,fan2023fate}.
FedAvgLoRA freezes the backbone weights and restricts local optimization and communication to lightweight LoRA parameters.
As a result, it is substantially more efficient than full-model federated fine-tuning, making it practical for resource-constrained clients.

Despite its computational and communication efficiency, FedAvgLoRA still suffers from three key issues, as illustrated in Fig.~\ref{fig:teaser}(a).
\ding{182} \textit{Limited update space}: its inherently low-rank update structure restricts the effective parameter update space, thereby limiting the model's effective learning capacity.
\ding{183} \textit{Inter-round state mismatch}: 
at the beginning of each communication round, the local LoRA parameters learned by the client in the previous round are replaced by the aggregated LoRA parameters downloaded from the server.
This disrupts cross-round local optimization continuity, leading to abrupt parameter shifts and severe \emph{inter-round oscillations} across rounds (see Fig.~\ref{fig:teaser}(c)).
\ding{184} \textit{Client-agnostic starting state}: 
in every round, clients initialize their local LoRA parameters from the same global LoRA parameters, without explicitly incorporating client-specific optimization signals.
As a result, the local starting state can be poorly aligned with heterogeneous local objectives, reducing the effectiveness of subsequent local updates and slowing client-side convergence (see Fig.~\ref{fig:teaser}(c)).

To tackle \ding{182}, the recently proposed FRLoRA~\cite{yanfederated} merges the aggregated LoRA update into the backbone across communication rounds, thereby enlarging the effective parameter update space.
As a result, FRLoRA allows LoRA updates to accumulate in the full parameter space and generally achieves better performance than FedAvgLoRA.
However, as shown in Fig.~\ref{fig:teaser}(b), the latter two issues remain insufficiently addressed.
First, FRLoRA still starts each new round from the global LoRA parameters, so the local LoRA parameters learned by a client in the previous round are overwritten.
This disrupts cross-round local optimization continuity and leads to unstable cross-round optimization.
Second, although FRLoRA's model-weight-based initialization provides a more structured starting point than the shared LoRA initialization in FedAvgLoRA, it is derived from shared model weights rather than optimization signals estimated from local data.
Therefore, the starting point remains client-agnostic and may be poorly aligned with heterogeneous local objectives.

In this paper, we propose \textbf{FedSmoothLoRA}, as illustrated in Fig.~\ref{fig:method}.
Similar to FRLoRA, FedSmoothLoRA preserves the enlarged update space by merging LoRA updates into the backbone across communication rounds.
Beyond this, FedSmoothLoRA addresses \ding{183} and \ding{184} by constructing a local LoRA initialization for each client.
Specifically, this initialization consists of two matrices: a \textbf{Round-Matching} matrix and a Gradient-Aligned matrix.
The Round-Matching matrix preserves cross-round local state continuity by matching the current local starting state to the effective local model reached in the previous round, thereby reducing abrupt parameter shifts, mitigating inter-round oscillations, and promoting smoother cross-round optimization.
Inspired by LoRA-GA~\cite{wang2024loraga}, the Gradient-Aligned matrix adapts gradient-based initialization to the federated setting by estimating gradient signals from local data, thereby providing a client-aware starting state that better matches heterogeneous local objectives and accelerates local convergence.
Thus, FedSmoothLoRA achieves consistently stronger empirical performance than FedAvgLoRA and other existing federated LoRA tuning methods across image classification and natural language generation tasks.
\begin{figure*}[t]
    \centering
    \includegraphics[width=\textwidth]{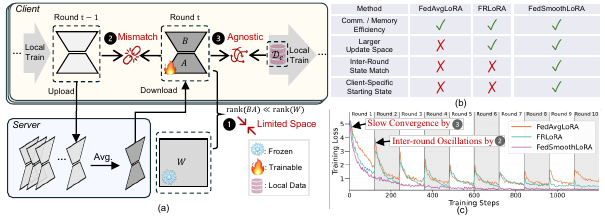}
    \vspace{-1.8em}
\caption{
\textbf{(a)} Illustration of the \textbf{three key issues} of FedAvgLoRA: 
\emph{Limited update space}, \emph{inter-round state mismatch}, 
and \emph{client-agnostic starting state}.
\textbf{(b)} \textbf{Comparison} of representative FL-LoRA methods. 
FedSmoothLoRA satisfies all three desired properties.
\textbf{(c)} \textbf{Training loss curves} across communication rounds.
FedSmoothLoRA achieves smoother training dynamics and faster local convergence compared with FedAvgLoRA and FRLoRA.
}
    \label{fig:teaser}
\end{figure*}

\textbf{Contribution.} In this work,
we propose \textbf{FedSmoothLoRA}, a federated LoRA fine-tuning framework that preserves the enlarged update space while improving cross-round local optimization continuity and providing a client-aware starting state for local training.
To mitigate inter-round state mismatch, we introduce the Round-Matching matrix, which matches the current local starting state to the effective local model reached in the previous round, thereby reducing abrupt parameter shifts and inter-round oscillations.
To address the client-agnostic starting state, we adapt gradient-based initialization to the federated setting by estimating gradient signals from local data, providing a client-aware starting state that better matches heterogeneous local objectives.
Extensive experiments on image classification and natural language generation tasks demonstrate that FedSmoothLoRA consistently outperforms existing federated LoRA tuning methods.

\section{Related Work}
\textbf{Parameter-Efficient Fine-tuning.}
Recently, PEFT (Parameter-Efficient Fine-Tuning)~\cite{han2024parameter} has emerged as an important technique for training models. 
Instead of fine-tuning all parameters of the model, PEFT introduces only a small number of trainable parameters at specific locations within the model~\cite{zaken2021bitfit, chen2022adaptformer,houlsby2019parameter,kim2021adapt}. 
Among these methods, LoRA~\cite{hu2021lora} (Low-Rank Adaptation) has become popular because it achieves performance on par with or better than previous PEFT methods without adding extra inference latency.
Several techniques have been proposed to enhance LoRA's structure, including AdaLoRA~\cite{zhang2023adalora}, 
ReLoRA~\cite{lialin2023relora},
rsLoRA~\cite{kalajdzievski2023rank}, 
DoRA~\cite{liu2024dora}, PiSSA~\cite{meng2024pissa}, and LoRA-GA~\cite{wang2024loraga}, among others~\cite{wang2024lorapro}. 
However, most of these methods do not account for the federated learning scenario, which can significantly degrade their performance. 
This motivates us to investigate how LoRA can be improved for better performance in federated learning settings.

\textbf{Federated Learning.}
Federated Learning (FL) has emerged as a widely adopted distributed solution for training models across decentralized data sources~\cite{mcmahan2017communication}.
Initially proposed in FedAvg~\cite{mcmahan2017communication}, FL achieved only moderate performance, especially in scenarios involving data heterogeneity. 
To address these limitations, various FL algorithms have been developed to improve performance, including FedAvgM~\cite{hsu2019measuring}, SCAFFOLD~\cite{karimireddy2020scaffold}, FedProx~\cite{li2020federated}, FedOPT~\cite{reddi2020adaptive}, and others.
Recently, methods such as FedLR~\cite{zhang2023fedpetuning} and FedIT~\cite{zhang2024towards} have explored combining FL with LoRA to improve federated tuning in communication- and computation-constrained environments.
OpenFedLLM~\cite{ye2024openfedllm} further reported that traditional FL methods do not achieve significant gains when directly integrated with LoRA.
Several LoRA-based FL methods, including FedRA~\cite{su2025fedra}, FlexLoRA~\cite{bai2024federated}, FFALoRA~\cite{sun2024improving}, and others~\cite{wang2024flora,singhal2025fedex,babakniya2023slora,yan2024federa,peng2026rethinking,fang2025federated}, improve federated LoRA tuning through aggregation, initialization, or optimization refinement.
FRLoRA~\cite{yanfederated} further enlarges the effective update space by merging LoRA updates into the backbone across rounds.
However, these methods still largely rely on a shared server-side LoRA or client-agnostic initialization at each round, leaving \textit{inter-round state mismatch} and the \textit{client-agnostic starting state} insufficiently addressed.
\section{Method}

\subsection{Preliminaries}
\label{sec:3.1}
\noindent\textbf{Low-Rank Adaptation (LoRA)}
~\cite{hu2021lora} is a widely used parameter-efficient fine-tuning method that substantially reduces the number of trainable parameters. It freezes the model weights and represents the update using two low-rank matrices, formulated as
\begin{flalign}
\boldsymbol{W}' = \boldsymbol{W} + \Delta \boldsymbol{W} := \boldsymbol{W} + s \boldsymbol{B}\boldsymbol{A},
\end{flalign}
where \(\boldsymbol{W} \in \mathbb{R}^{m \times n}\) denotes the original weight matrix, \(\boldsymbol{B} \in \mathbb{R}^{m \times r}\) and \(\boldsymbol{A} \in \mathbb{R}^{r \times n}\) are the trainable low-rank factors, and \(r \ll \min(m,n)\) is the LoRA rank. 
Following the scaling strategy of rsLoRA~\cite{kalajdzievski2023rank}, the scaling factor is set to \(s=\alpha/\sqrt{r}\), where \(\alpha\) is a scaling hyperparameter.
To ensure that the initial adapted weight remains consistent with the pre-trained weight, \(\boldsymbol{B}\) is initialized to zero, while \(\boldsymbol{A}\) is initialized using Kaiming initialization. 

\noindent\textbf{Rank-\(r\) SVD Approximation of a Matrix.}
The \(\operatorname{SVDApprox}\) operator used throughout the paper is defined as follows.

\begin{definition}[SVDApprox]
\label{rank-r_svd}
Let \(\boldsymbol{M} \in \mathbb{R}^{m \times n}\) have singular value decomposition \(\boldsymbol{M} = \bm{U}\boldsymbol{\Sigma}\bm{V}^{\top}\), 
where \(\bm{U} \in \mathbb{R}^{m \times m}\) and \(\bm{V} \in \mathbb{R}^{n \times n}\) are orthogonal matrices, and 
\(\boldsymbol{\Sigma} \in \mathbb{R}^{m \times n}\) is a rectangular diagonal matrix whose diagonal entries are the singular values of \(\boldsymbol{M}\), arranged in non-increasing order. 
For a target rank \(r \le \min(m,n)\), \(\operatorname{SVDApprox}\) is defined as
\begin{flalign}
\operatorname{SVDApprox}(\boldsymbol{M}; r) := (\boldsymbol{B}, \boldsymbol{A}),
\end{flalign}
where \(\boldsymbol{B} = \bm{U}_{[:, :r]} \boldsymbol{\Sigma}_{[:r, :r]}^{1/2} \in \mathbb{R}^{m \times r}\) and
\(\boldsymbol{A} = \boldsymbol{\Sigma}_{[:r, :r]}^{1/2} \left(\bm{V}_{[:, :r]}\right)^{\top} \in \mathbb{R}^{r \times n}\).
\end{definition}

The product \(\boldsymbol{B}\boldsymbol{A}\) is a truncated rank-\(r\) SVD approximation of \(\boldsymbol{M}\). By the Eckart--Young--Mirsky theorem~\cite{schmidt1907theorie}, it is the best rank-\(r\) approximation of \(\boldsymbol{M}\) under both the Frobenius norm and the spectral norm.

\noindent\textbf{FedAvgLoRA and FRLoRA.}
FedAvgLoRA~\cite{zhang2024towards,zhang2023fedpetuning,ye2024openfedllm} incorporates LoRA into the standard FedAvg framework for parameter-efficient federated fine-tuning. 
Given clients \(C=\{c\}_{c=1}^{K}\), each client \(c\) holds a local dataset \(\mathcal{D}_c\) with size \(n_c\), and all clients share the same pre-trained backbone \(\boldsymbol{W}_0\). 
At round \(t\), the server broadcasts the global LoRA parameters \((\boldsymbol{B}_s^t,\boldsymbol{A}_s^t)\), after which clients perform local training and upload the updated adapters \((\boldsymbol{B}_c^t,\boldsymbol{A}_c^t)\) for aggregation. 
Although FedAvgLoRA reduces communication and memory costs, it still suffers from limited update space, inter-round state mismatch, and a client-agnostic starting state. 
FRLoRA extends FedAvgLoRA by accumulating LoRA updates across rounds and merging the aggregated updates into the global backbone, thereby enlarging the effective update space, but the latter two issues remain unresolved.

\begin{figure*}[t]
    \centering
    \includegraphics[width=\linewidth]{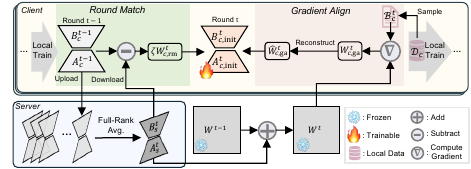}
\caption{
\textbf{Illustration} of the proposed \textbf{FedSmoothLoRA}. 
On the client side, local LoRA is initialized by combining the \textbf{Round-Matching} matrix
\(\boldsymbol{W}_{c,\mathrm{rm}}^{t}\) and the \textbf{Gradient-Aligned} matrix
\(\boldsymbol{\hat{W}}_{c,\mathrm{ga}}^{t}\), enabling smoother and faster local training.
On the server side, the uploaded client LoRA updates are aggregated in the full parameter space and merged into the backbone.
At each communication round, both clients and the server merge LoRA updates into the backbone, thereby accumulating updates across rounds and enlarging the update space.
}
    \label{fig:method}
\end{figure*}

\begin{figure*}[t]
\resizebox{1.0\linewidth}{!}{
\begin{minipage}{1.0\linewidth}
\begin{algorithm}[H]
\caption{FedSmoothLoRA}
\label{alg:2}
\begin{algorithmic}[1]

\State \textbf{Input:} Pretrained backbone \(\boldsymbol{W}\); clients \(C=\{c\}_{c=1}^{K}\) with local datasets \(\mathcal{D}_c\) and sizes \(n_c\); rounds \(T\); LoRA rank \(r\); scaling factor \(\alpha\); stabilization factor \(\gamma\); coefficient mode \(\zeta_{\mathrm{mode}}\)
\State \textbf{Output:} Final server backbone \(\boldsymbol{W}_s^{T}\)
\State \textbf{Initialization:} \(\boldsymbol{W}_s^{0}\leftarrow\boldsymbol{W}\), \(\boldsymbol{W}_c^{0}\leftarrow\boldsymbol{W}\ \forall c\in C\), \(\boldsymbol{A}_s^{0}\leftarrow\operatorname{KaimingInit}\), \(\boldsymbol{B}_s^{0}\leftarrow\boldsymbol{0}\), \(s\leftarrow\alpha/\sqrt{r}\)

\State \textbf{LocalUpdate}\((c,\boldsymbol{B}_s^t,\boldsymbol{A}_s^t,t)\):
\If{\(t=0\)}
\State \(\boldsymbol{W}_c^{t}\leftarrow\boldsymbol{W}_c^{0}\), \quad
\(\boldsymbol{W}^{t}_{c,\mathrm{rm}}\leftarrow\boldsymbol{0}\)
\Else
    \State \(\boldsymbol{W}_c^{t}\leftarrow\boldsymbol{W}_c^{t-1}+s\boldsymbol{B}_s^{t}\boldsymbol{A}_s^{t}\)
\State
\begin{minipage}[t]{0.57\linewidth}
\(\textcolor{red!70!black}{
\boldsymbol{W}^{t}_{c,\mathrm{rm}}\leftarrow
\boldsymbol{B}_{c}^{t-1}\boldsymbol{A}_{c}^{t-1}
-
\boldsymbol{B}_s^{t}\boldsymbol{A}_s^{t}
}\)
\end{minipage}
\hfill
\begin{minipage}[t]{0.42\linewidth}
\raggedleft
\(\textcolor{red!70!black}{\triangleright}\)
{\small \textcolor{red!70!black}{\textit{Build the Round-Matching matrix}}}
\end{minipage}

\EndIf
\State Sample a small mini-batch \(\mathcal{B}_c^t\subset\mathcal{D}_c\)

\State
\begin{minipage}[t]{0.58\linewidth}
\(\textcolor{red!70!black}{
\boldsymbol{W}^{t}_{c,\mathrm{ga}}\leftarrow
\nabla_{\boldsymbol{W}_c^t}\mathcal{L}(\mathcal{B}_c^t)
}\)
\end{minipage}
\hfill
\begin{minipage}[t]{0.42\linewidth}
\raggedleft
\(\textcolor{red!70!black}{\triangleright}\)
{\small \textcolor{red!70!black}{\textit{Build the Gradient-Aligned matrix}}}
\end{minipage}
\State \(\bm{U},\boldsymbol{\Sigma},\bm{V}\leftarrow \operatorname{SVD}(\boldsymbol{W}^{t}_{c,\mathrm{ga}})\), then reconstruct \(\hat{\boldsymbol{W}}^{t}_{c,\mathrm{ga}}\) by Eq.~\ref{eq:gai_recon}
\State \(\zeta\leftarrow\operatorname{CosineSchedule}(t,T)\) if \(\zeta_{\mathrm{mode}}=\texttt{decay}\); otherwise \(\zeta\leftarrow1\)
\State \(\boldsymbol{W}_{c,\mathrm{init}}^t
\leftarrow
\hat{\boldsymbol{W}}^{t}_{c,\mathrm{ga}}
+
\zeta\boldsymbol{W}^{t}_{c,\mathrm{rm}}\)

\State \((\boldsymbol{B}^{t}_{c,\mathrm{init}},\boldsymbol{A}^{t}_{c,\mathrm{init}})
\leftarrow
\operatorname{SVDApprox}\!\left(\boldsymbol{W}_{c,\mathrm{init}}^t;\,r\right)\)
\Comment{\small Local LoRA initialization}
\State Train on \(\mathcal{D}_c\) from \((\boldsymbol{B}^{t}_{c,\mathrm{init}},\boldsymbol{A}^{t}_{c,\mathrm{init}})\) with backbone \(\boldsymbol{W}_c^t-s\hat{\boldsymbol{W}}^{t}_{c,\mathrm{ga}}\), obtaining \((\widetilde{\boldsymbol{B}}_c^{t},\widetilde{\boldsymbol{A}}_c^{t})\)
\State \((\boldsymbol{B}_c^{t},\boldsymbol{A}_c^{t})\leftarrow\operatorname{SVDApprox}\!\left(\widetilde{\boldsymbol{B}}_c^{t}\widetilde{\boldsymbol{A}}_c^{t}-\hat{\boldsymbol{W}}^{t}_{c,\mathrm{ga}};\,r\right)\)
\State \textbf{return} \((\boldsymbol{B}_c^{t},\boldsymbol{A}_c^{t})\)

\Statex

\State \textbf{// Server Update}
\For{\(t=0,1,\dots,T-1\)}
    \ForAll{\(c\in C\) \textit{in parallel}}
        \State \((\boldsymbol{B}_c^{t},\boldsymbol{A}_c^{t})\leftarrow\textsc{LocalUpdate}(c,\boldsymbol{B}_s^t,\boldsymbol{A}_s^t,t)\)
    \EndFor
\State \(\Delta\boldsymbol{W}_s^{t+1}\leftarrow\frac{1}{N}\sum_{c\in C}n_c\,\boldsymbol{B}_c^{t}\boldsymbol{A}_c^{t}\)
\State \((\boldsymbol{B}_s^{t+1},\boldsymbol{A}_s^{t+1})\leftarrow\operatorname{SVDApprox}(\Delta\boldsymbol{W}_s^{t+1};r)\) \Comment{\small Server Update via Full-Rank Aggregation.}
\State \(\boldsymbol{W}_s^{t+1}\leftarrow\boldsymbol{W}_s^{t}+s\boldsymbol{B}_s^{t+1}\boldsymbol{A}_s^{t+1}\)
\EndFor

\State \textbf{return} \(\boldsymbol{W}_s^T\)

\end{algorithmic}
\end{algorithm}
\end{minipage}}
\vspace{-1.0em}
\end{figure*}

\subsection{FedSmoothLoRA}

To preserve the enlarged update space and further address inter-round state mismatch and the client-agnostic starting state, we propose FedSmoothLoRA.
As shown in Algorithm~\ref{alg:2} and Fig.~\ref{fig:method}, FedSmoothLoRA follows the standard federated optimization pipeline, consisting of a client-side local update and a server-side aggregation.
For clarity, we present the overall framework under the full device participation setting, where all clients participate in each communication round.

\subsubsection{Local Update}
At communication round \(t\), similar to FRLoRA, FedSmoothLoRA first merges the server-side aggregated LoRA update into the client backbone:
\begin{flalign}
\boldsymbol{W}_c^{t} \leftarrow \boldsymbol{W}_c^{t-1} + s \boldsymbol{B}_s^{t}\boldsymbol{A}_s^{t}.
\end{flalign}
This operation enlarges the model parameter update space, thereby addressing \ding{182}.

After updating the backbone, client \(c\) constructs a client-side initialization matrix \(\boldsymbol{W}_{c,\mathrm{init}}^t\) for local LoRA training. 
This initialization combines two matrices: the Round-Matching matrix \(\boldsymbol{W}_{c,\mathrm{rm}}^t\), which preserves cross-round local-state continuity, and the gradient-aligned matrix \(\hat{\boldsymbol{W}}_{c,\mathrm{ga}}^t\), which incorporates client-specific optimization signals. 

\noindent\textbf{Round-Matching Matrix.} FedSmoothLoRA reduces the discrepancy between the current server-side LoRA state and the client-specific local LoRA state learned in the previous round. 
For the first communication round, we set \(\boldsymbol{W}^{0}_{c,\mathrm{rm}}=\boldsymbol{0}\). 
For \(t>0\), the $\boldsymbol{W}_{c,\mathrm{rm}}^{t}$ term is defined as
\begin{flalign}
\boldsymbol{W}^{t}_{c,\mathrm{rm}} \leftarrow \boldsymbol{B}_{c}^{t-1}\boldsymbol{A}_{c}^{t-1} - \boldsymbol{B}_s^{t}\boldsymbol{A}_s^{t}.
\label{eq:rci}
\end{flalign}
This term captures the discrepancy between the LoRA state retained from the previous local round and the current server-side LoRA update. 
By incorporating it into the initialization, FedSmoothLoRA aligns the new round with the previous local training state, thereby suppressing abrupt parameter shifts and mitigating inter-round oscillations.
A detailed analysis of how $\boldsymbol{W}_{c,\mathrm{rm}}^{t}$ improves inter-round state consistency is provided in Appendix~\ref{app:inter_round_consistency}.

The Round-Matching matrix can be naturally extended to the partial participation setting. 
Let \(\hat{t}_c < t\) denote the most recent round before round \(t\) in which client \(c\) participated. 
When client \(c\) rejoins training at round \(t\), the corresponding Round-Matching term is defined as follows:
\begin{flalign}
\boldsymbol{W}_{c,\mathrm{rm}}^{t}
\leftarrow
\boldsymbol{B}_{c}^{\hat{t}_c} \boldsymbol{A}_{c}^{\hat{t}_c}
-
\sum_{\tau=\hat{t}_c+1}^{t} 
\boldsymbol{B}_{s}^{\tau} \boldsymbol{A}_{s}^{\tau}.
\label{eq:rm_partial}
\end{flalign}
This formulation matches the retained client-side LoRA state from the last active round with the accumulated server-side LoRA updates during the inactive period, thereby reducing state mismatch when the client re-enters training.

\noindent\textbf{Gradient-Aligned Matrix.} FedSmoothLoRA obtains a LoRA initialization that is more relevant to each client's local task data from local gradients rather than model weights.
Compared with the model-weight-based initialization in FRLoRA, which is shared across clients and is therefore less sensitive to client-specific objectives, $\boldsymbol{\hat{W}}_{c,\mathrm{ga}}^{t}$ constructs the initialization directly from each client's local optimization signal.
Specifically, for client \(c\) at round \(t\), to reduce memory consumption, we estimate the local gradient in a layer-wise manner using a small calibration mini-batch \(\mathcal{B}_c^t \subset \mathcal{D}_c\):
\begin{flalign}
\boldsymbol{W}_{c,\mathrm{ga}}^t \leftarrow 
\nabla_{\boldsymbol{W}_c^t}\mathcal{L}(\mathcal{B}_c^t),
\label{eq:gai_grad}
\end{flalign}
where \(\mathcal{B}_c^t\) is sampled from the client's local data.
This lightweight gradient estimate provides client-specific optimization signals, yielding a starting point that is better aligned with the client's local objective.

To further stabilize training, inspired by LoRA-GA~\cite{wang2024loraga}, we reconstruct \(\boldsymbol{W}_{c,\mathrm{ga}}^t\) as \(\hat{\boldsymbol{W}}_{c,\mathrm{ga}}^t\). 
Specifically, we first compute the singular value decomposition of \(\boldsymbol{W}_{c,\mathrm{ga}}^t\), i.e., \(\boldsymbol{W}_{c,\mathrm{ga}}^t = \bm{U} \boldsymbol{\Sigma} \bm{V}^{\top}\). 
We then reorganize the singular vectors to obtain a more stable gradient-aligned matrix:
\begin{flalign}
\hat{\boldsymbol{W}}_{c,\mathrm{ga}}^t \leftarrow \frac{\sqrt{d_{\mathrm{out}}}}{\gamma^2} \, \bm{U}_{[:,\,r+1:2r]} \left(\bm{V}_{[:,\,1:r]}\right)^{\top},
\label{eq:gai_recon}
\end{flalign}
where \(\gamma\) is a stabilization hyperparameter, \(d_{\mathrm{out}}\) denotes the output dimension of \(\boldsymbol{W}_{c,\mathrm{ga}}^t\), and \(2r \le \min(m,n)\).

\noindent\textbf{Local LoRA Initialization.} Based on the two terms above, FedSmoothLoRA constructs the final initialization matrix as
\begin{flalign}
\boldsymbol{W}_{c,\mathrm{init}}^t \leftarrow 
\hat{\boldsymbol{W}}_{c,\mathrm{ga}}^t 
+ \zeta \boldsymbol{W}^t_{c,\mathrm{rm}}.
\label{eq:init_matrix}
\end{flalign}
Here, \(\zeta\) controls the contribution of the Round-Matching term. 
FedSmoothLoRA supports two modes for \(\zeta\): a \textit{constant mode} and a \textit{decay mode}. 
In the \textit{constant mode}, we set \(\zeta=1\), which is used for the \texttt{IID} setting since clients tend to follow relatively consistent optimization directions and stronger cross-round continuity is beneficial. 
In the \textit{decay mode}, \(\zeta\) follows a cosine schedule, which is used for the \texttt{Non-IID} setting to preserve the stabilizing effect of $\boldsymbol{W}_{c,\mathrm{rm}}^{t}$ in early rounds while gradually reducing its influence to avoid over-constraining heterogeneous local updates.

The initialized LoRA parameters \(\boldsymbol{B}_{c,\mathrm{init}}^t\) and \(\boldsymbol{A}_{c,\mathrm{init}}^t\) are then obtained by
\begin{flalign}
(\boldsymbol{B}_{c,\mathrm{init}}^t, \boldsymbol{A}_{c,\mathrm{init}}^t)
\leftarrow
\operatorname{SVDApprox}(\boldsymbol{W}_{c,\mathrm{init}}^t;\, r).
\label{eq:init_general}
\end{flalign}

Starting from \(\boldsymbol{B}_{c,\mathrm{init}}^t\) and \(\boldsymbol{A}_{c,\mathrm{init}}^t\), client \(c\) performs local optimization on its local dataset \(\mathcal{D}_c\) with the backbone frozen as \(\boldsymbol{W}_c^{t} - s\hat{\boldsymbol{W}}_{c,\mathrm{ga}}^t\), and obtains temporary LoRA parameters \(\widetilde{\boldsymbol{B}}_c^t\) and \(\widetilde{\boldsymbol{A}}_c^t\). 
After local training, the effective LoRA update uploaded by client \(c\) is computed as
\begin{flalign}
(\boldsymbol{B}_c^t, \boldsymbol{A}_c^t) \leftarrow \operatorname{SVDApprox}\!\left(\widetilde{\boldsymbol{B}}_c^t\widetilde{\boldsymbol{A}}_c^t - \hat{\boldsymbol{W}}_{c,\mathrm{ga}}^t;\, r\right).
\label{eq:client_update}
\end{flalign}
Finally, \(\boldsymbol{B}_c^t\) and \(\boldsymbol{A}_c^t\) are transmitted back to the server for aggregation.
\subsubsection{Server Update via Full-Rank Aggregation}
After receiving the uploaded LoRA updates from all clients, the server first performs \emph{Full-Rank Aggregation} in the full update space, similar to~\cite{bai2024federated}:
\begin{flalign}
\Delta \boldsymbol{W}_s^{t+1} \leftarrow \frac{1}{N}\sum_{c \in C} n_c\, \boldsymbol{B}_c^{t}\boldsymbol{A}_c^{t},
\label{eq:server_delta}
\end{flalign}
where $N=\sum_{c\in C} n_c$. The aggregated update is then projected back to a rank-$r$ LoRA parameterization:
\begin{flalign}
(\boldsymbol{B}_s^{t+1}, \boldsymbol{A}_s^{t+1}) \leftarrow \operatorname{SVDApprox}(\Delta \boldsymbol{W}_s^{t+1}; r).
\label{eq:server_svd}
\end{flalign}
Compared with the \emph{Low-Rank Aggregation} strategy in FedAvgLoRA, which directly averages the low-rank factors \(B\) and \(A\), \emph{Full-Rank Aggregation} first integrates client updates in a less restricted matrix space and then projects the result back to a rank-$r$ LoRA form. This design allows the server to preserve more global update information before compression. 

Similar to the client side, the server further merges the aggregated LoRA update into the global backbone:
\begin{flalign}
\boldsymbol{W}_s^{t+1} \leftarrow \boldsymbol{W}_s^{t} + s \boldsymbol{B}_s^{t+1}\boldsymbol{A}_s^{t+1}.
\label{eq:server_backbone}
\end{flalign}

\section{Experiments}
\label{experiment}
\subsection{Image Classification Tasks}
\label{sec:4.1}

\noindent\textbf{Experimental setup.}
We evaluate all methods on CIFAR-100 with 5 clients under both IID and Non-IID partitions to verify whether the proposed design can mitigate the limitations of FedAvgLoRA. 
The 50,000 training samples are evenly distributed across clients, and each client further splits its local data into 80\% for training and 20\% for validation.
For the Non-IID setting, the label distribution is generated by a Dirichlet distribution with $\beta=0.1$.
We use ViT-Small~\cite{alexey2020image} pre-trained on ImageNet-21k as the backbone, with LoRA rank $r=2$ and scaling factor $\alpha=4$.

\begin{wraptable}{r}{0.50\textwidth}
    \vspace{-1.2em}
    \centering
    \caption{Top-1 test accuracy (\%) on CIFAR-100 under \texttt{Non-IID} and \texttt{IID} client partitions. The best results are highlighted in \textbf{bold}, and ${\color{ForestGreen}\uparrow}$ and ${\color{red}\downarrow}$ indicate increments and decrements compared with FedAvgLoRA.}
    \label{tab:cifar100_results}
    \resizebox{\linewidth}{!}{
    \begin{tabular}{lcc}
        \toprule
        \textbf{Method} & \textbf{\texttt{Non-IID}} & \textbf{\texttt{IID}} \\
        \midrule
        FedAvgLoRA
                                  & 57.36\stdvw{0.07} & 73.97\stdvw{0.11} \\
        FedAvgM(LoRA)\cite{hsu2019measuring}
                                  & 57.53\stdvu{0.17} & 75.37\stdvu{0.10} \\
        SCAFFOLD(LoRA)\cite{karimireddy2020scaffold}
                                  & 58.79\stdvu{0.12} & 75.71\stdvu{0.06} \\
        FedProx(LoRA)\cite{li2020federated}
                                  & 58.79\stdvu{0.13} & 74.35\stdvu{0.28} \\
        FlexLoRA\cite{bai2024federated}
                                  & 56.23\stdvd{0.63} & 74.08\stdvu{0.17} \\
        FFALoRA\cite{sun2024improving}
                                  & 10.27\stdvd{0.40} & 17.86\stdvd{0.20} \\
        LoRA-FAIR\cite{bian2025lora}
                                  & 59.03\stdvu{0.08} & 75.33\stdvu{0.15} \\
        FRLoRA\cite{yanfederated}
                                  & 60.28\stdvu{0.26} & 79.97\stdvu{0.16} \\
        {\cellcolor{mypink}\textbf{\texttt{FedSmoothLoRA}}}
                                  & {\cellcolor{mypink}\textbf{64.46}\stdvu{0.18}} & {\cellcolor{mypink}\textbf{84.07}\stdvu{0.13}} \\
        \bottomrule
    \end{tabular}
    }
    \vspace{-1.0em}
\end{wraptable}
\noindent\textbf{Results.}
Table~\ref{tab:cifar100_results} shows that \textbf{\texttt{FedSmoothLoRA}} achieves the best performance in both \texttt{Non-IID} and \texttt{IID} settings, reaching $64.46\%$ and $84.07\%$, respectively.
Compared with FedAvgLoRA, it improves the accuracy by $7.10$ and $10.10$ percentage points, respectively.
It also consistently outperforms the strongest baseline, FRLoRA, by $4.18$ and $4.10$ percentage points under the \texttt{Non-IID} and \texttt{IID} settings, respectively.
These results demonstrate that \textbf{\texttt{FedSmoothLoRA}} is effective under both heterogeneous and homogeneous client data distributions.
Similar to FRLoRA, FedSmoothLoRA benefits from a larger effective parameter update space, which helps alleviate the low-rank update-space limitation in \ding{182}.
More importantly, the additional gains over FRLoRA indicate that merely enlarging the update space is not sufficient for stable and efficient federated LoRA tuning.
The proposed \textbf{\texttt{$\boldsymbol{W}_{c,\mathrm{rm}}^{t}$}} mitigates \ding{183} by reducing inter-round state mismatch and preserving cross-round optimization continuity, while \textbf{\texttt{$\boldsymbol{\hat{W}}_{c,\mathrm{ga}}^{t}$}} addresses \ding{184} by providing a more task-relevant and client-aware initialization for local updates.
Together, these two components lead to smoother optimization and better final performance.

\subsection{Natural Language Generation Tasks}
\label{sec:4.2}

\quad\begin{wraptable}{r}{0.50\textwidth}
    \vspace{-1.9em}
    \centering
    \caption{Performance on \textbf{math and code tasks} under the \texttt{IID} setting. The backbone model is LLaMA-3.2-1B. The best results are highlighted in \textbf{bold}, and ${\color{ForestGreen}\uparrow}$ and ${\color{red}\downarrow}$ indicate increments and decrements compared with FedAvgLoRA.}
    \label{tab:math_code_results}
    \resizebox{\linewidth}{!}{
    \begin{tabular}{l|cc}
        \toprule
        \textbf{Method} & \textbf{GSM8K} & \textbf{HumanEval} \\
        \midrule
        FedAvgLoRA
        & 25.85\stdvw{0.23} & 15.81\stdvw{0.35} \\
        FedAvgM(LoRA)\cite{hsu2019measuring}
        & 25.70\stdvd{0.65} & 15.61\stdvd{0.24} \\
        SCAFFOLD(LoRA)\cite{karimireddy2020scaffold}
        & 26.18\stdvu{0.23} & 16.43\stdvu{0.39} \\
        FedProx(LoRA)\cite{li2020federated}
        & 25.47\stdvd{0.49} & 16.05\stdvu{0.19} \\
        FlexLoRA\cite{bai2024federated}
        & 26.47\stdvu{0.27} & 17.72\stdvu{0.19} \\
        FFALoRA\cite{sun2024improving}
        & 20.62\stdvd{0.73} & 15.20\stdvd{0.46} \\
        LoRA-FAIR\cite{bian2025lora} & 26.56\stdvu{0.27} & 17.48\stdvu{0.35} \\
        FRLoRA\cite{yanfederated}
        & 33.98\stdvu{0.09} & 18.12\stdvu{0.19} \\
        {\cellcolor{mypink}\textbf{\texttt{FedSmoothLoRA}}}
        & {\cellcolor{mypink}\textbf{36.74}\stdvu{0.51}}
        & {\cellcolor{mypink}\textbf{18.25}\stdvu{0.26}} \\
        \bottomrule
    \end{tabular}
    }
\end{wraptable}
\noindent\textbf{Experimental Setting.}
To evaluate the performance of FedSmoothLoRA on large language models, we conduct experiments on three natural language generation tasks using LLaMA-3.2-1B~\cite{dubey2024llama} as the backbone. 
Unless otherwise specified, all models are trained for 10 communication rounds with 200 local training steps per round, and the LoRA configuration is fixed to \(r=32\) and \(\alpha=64\).
For the math and code tasks, we adopt an IID setting with 3 clients, where the training data are evenly split across clients. For the math task, the model is trained on a 100k subset of MetaMathQA~\cite{yu2023metamath} and evaluated on GSM8K~\cite{cobbe2021training} using accuracy. For the code task, the model is trained on a 100k subset of Code-Feedback~\cite{zheng2024opencodeinterpreter} and evaluated on HumanEval~\cite{chen2021evaluating} using Pass@1. 
For the chat task, we use the Aya dataset~\cite{singh2024aya} and consider a language-skewed \texttt{Non-IID} setting, where the training data are partitioned across seven language clients. We evaluate the resulting model on English, Arabic, Russian, Chinese, Portuguese, French, and Spanish.

\noindent\textbf{Results.}
As shown in Tables~\ref{tab:math_code_results} and~\ref{tab:chat_results}, \textbf{\texttt{FedSmoothLoRA}} achieves the best overall performance across math reasoning, code generation, and multilingual chat tasks in federated LLM training.
Specifically, on GSM8K, \textbf{\texttt{FedSmoothLoRA}} reaches $36.74$, surpassing FedAvgLoRA by $10.89$ points and the strongest baseline, FRLoRA, by $2.76$ points.
On HumanEval, it achieves $18.25$ Pass@1, improving over FedAvgLoRA by $2.44$ points.
On the multilingual chat benchmark, \textbf{\texttt{FedSmoothLoRA}} obtains the best average score of $4.03$, compared with $3.54$ for FedAvgLoRA and $3.83$ for FRLoRA.
It also achieves the best results on several languages, including English, Chinese, French, and Spanish, while remaining competitive on Arabic and Portuguese.
These results suggest that the advantage of \textbf{\texttt{FedSmoothLoRA}} is not limited to a single task, but transfers consistently across reasoning, coding, and open-ended conversational generation, while also improving overall multilingual robustness under language-skewed \texttt{Non-IID} client distributions.
Similar to FRLoRA, \textbf{\texttt{FedSmoothLoRA}} benefits from enlarging the effective update space to alleviate \ding{182}.
More importantly, its gains over FRLoRA further support the effectiveness of \textbf{\texttt{$\boldsymbol{W}_{c,\mathrm{rm}}^{t}$}} and \textbf{\texttt{$\boldsymbol{\hat{W}}_{c,\mathrm{ga}}^{t}$}}.
Specifically, \textbf{\texttt{$\boldsymbol{W}_{c,\mathrm{rm}}^{t}$}} improves cross-round training continuity by reducing state inconsistency related to \ding{183}, while \textbf{\texttt{$\boldsymbol{\hat{W}}_{c,\mathrm{ga}}^{t}$}} provides a more task-aware initialization for local updates and thus alleviates \ding{184}.
\begin{table*}[t]
    \centering
    \caption{Performance on the chat task under the language-skewed \texttt{Non-IID} setting on the Aya dataset (English, Arabic, Russian, Chinese, Portuguese, French, and Spanish). The backbone model is LLaMA-3.2-1B. The best results are highlighted in \textbf{bold}, and ${\color{ForestGreen}\uparrow}$ and ${\color{red}\downarrow}$ indicate increments and decrements compared with FedAvgLoRA.}
    \label{tab:chat_results}
    \resizebox{\linewidth}{!}{
    \begin{tabular}{l|cccccccc}
        \toprule
        \textbf{Method} & \textbf{English} & \textbf{Arabic} & \textbf{Russian} & \textbf{Chinese} & \textbf{Portuguese} & \textbf{French} & \textbf{Spanish} & \textbf{Average} \\
        \midrule
        FedAvgLoRA
        & 4.33\stdvw{0.16}
        & 4.10\stdvw{0.58}
        & 3.20\stdvw{0.30}
        & 3.65\stdvw{0.26}
        & 2.73\stdvw{0.20}
        & 2.77\stdvw{0.25}
        & 4.00\stdvw{0.26}
        & 3.54\stdvw{0.09} \\
        
        SCAFFOLD(LoRA)\cite{karimireddy2020scaffold}
        & 4.32\stdvd{0.26}
        & 4.12\stdvu{0.38}
        & 3.25\stdvu{0.33}
        & 3.50\stdvd{0.38}
        & 2.72\stdvd{0.21}
        & 2.77\stdvw{0.26}
        & 4.02\stdvu{0.30}
        & 3.53\stdvd{0.06} \\
        
        FedProx(LoRA)\cite{li2020federated}
        & 4.23\stdvd{0.30}
        & 3.92\stdvd{0.46}
        & 3.03\stdvd{0.23}
        & 3.52\stdvd{0.31}
        & 2.62\stdvd{0.28}
        & 2.67\stdvd{0.30}
        & 3.85\stdvd{0.28}
        & 3.40\stdvd{0.11} \\
        
        FedAvgM(LoRA)\cite{hsu2019measuring}
        & 4.37\stdvu{0.23}
        & 4.05\stdvd{0.40}
        & 3.18\stdvd{0.35}
        & 3.60\stdvd{0.30}
        & 2.75\stdvu{0.20}
        & 2.80\stdvu{0.25}
        & 4.05\stdvu{0.30}
        & 3.54\stdvw{0.04} \\
        
        FlexLoRA\cite{bai2024federated}
        & 4.62\stdvu{0.28}
        & 4.02\stdvd{0.25}
        & 3.23\stdvu{0.30}
        & 3.68\stdvu{0.28}
        & 2.80\stdvu{0.15}
        & 2.88\stdvu{0.20}
        & 4.07\stdvu{0.18}
        & 3.61\stdvu{0.08} \\
        
        FFALoRA\cite{sun2024improving}
        & 4.40\stdvu{0.31}
        & 3.93\stdvd{0.38}
        & 3.10\stdvd{0.41}
        & 3.48\stdvd{0.37}
        & 2.80\stdvu{0.28}
        & 2.88\stdvu{0.26}
        & 4.03\stdvu{0.28}
        & 3.52\stdvd{0.03} \\
        LoRA-FAIR\cite{bian2025lora}
        & 4.70\stdvu{0.18}
        & 4.13\stdvu{0.20}
        & \textbf{3.53}\stdvu{0.33}
        & 3.60\stdvu{0.40}
        & 2.90\stdvu{0.13}
        & 3.00\stdvu{0.09}
        & 4.18\stdvu{0.15} 
        & 3.72\stdvu{0.14} \\
        FRLoRA\cite{yanfederated}
        & 5.18\stdvu{0.29}
        & \textbf{4.50}\stdvu{0.31}
        & 3.17\stdvd{0.26}
        & 3.65\stdvw{0.39}
        & \textbf{3.18}\stdvu{0.24}
        & 3.12\stdvu{0.21}
        & 3.98\stdvd{0.33}
        & 3.83\stdvu{0.14} \\
        
        {\cellcolor{mypink}\textbf{\texttt{FedSmoothLoRA}}}
        & {\cellcolor{mypink}\textbf{5.65}\stdvu{0.22}}
        & {\cellcolor{mypink}4.20\stdvu{0.26}}
        & {\cellcolor{mypink}3.35\stdvu{0.26}}
        & {\cellcolor{mypink}\textbf{4.30}\stdvu{0.22}}
        & {\cellcolor{mypink}2.75\stdvu{0.17}}
        & {\cellcolor{mypink}\textbf{3.45}\stdvu{0.28}}
        & {\cellcolor{mypink}\textbf{4.50}\stdvu{0.33}}
        & {\cellcolor{mypink}\textbf{4.03}\stdvu{0.06}} \\
        \bottomrule
    \end{tabular}
    }
\end{table*}

\begin{figure*}[t]
    \centering
    \begin{minipage}[t]{0.48\linewidth}
        \centering
        \includegraphics[width=\linewidth]{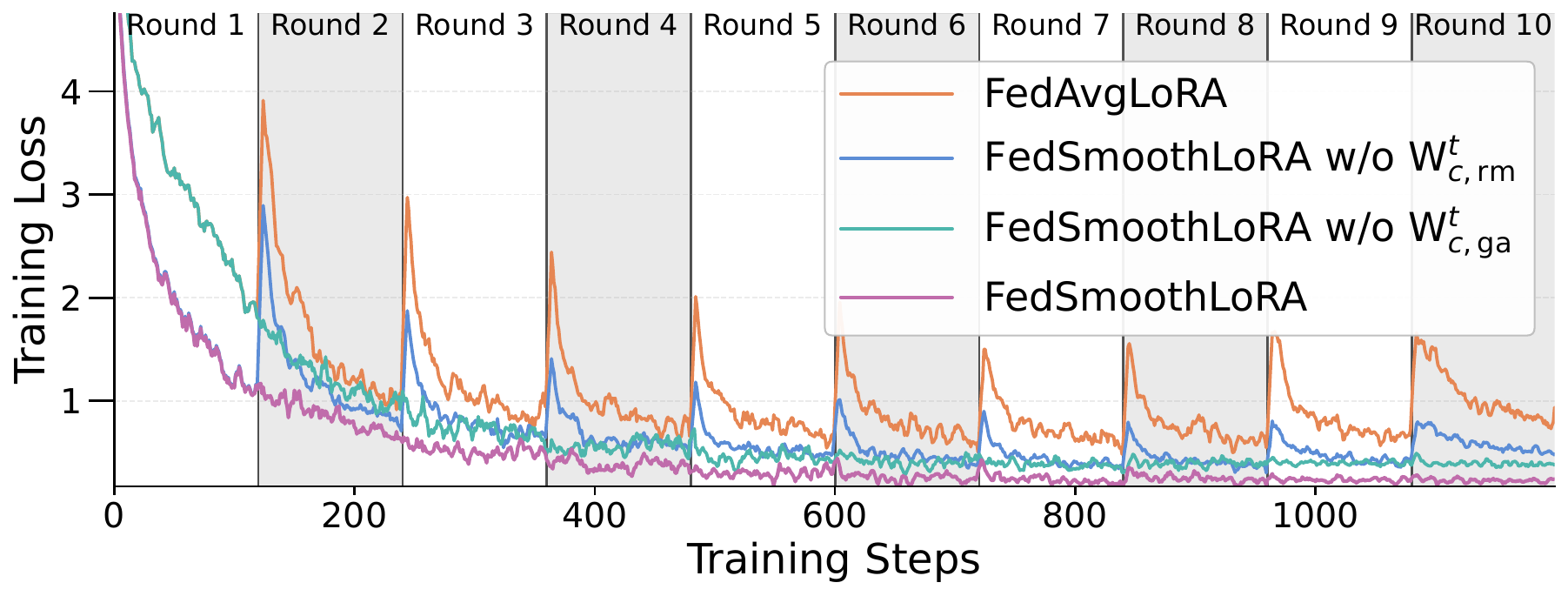}
        \centerline{\small (a) \texttt{Non-IID}}
    \end{minipage}
    \hfill
    \begin{minipage}[t]{0.48\linewidth}
        \centering
        \includegraphics[width=\linewidth]{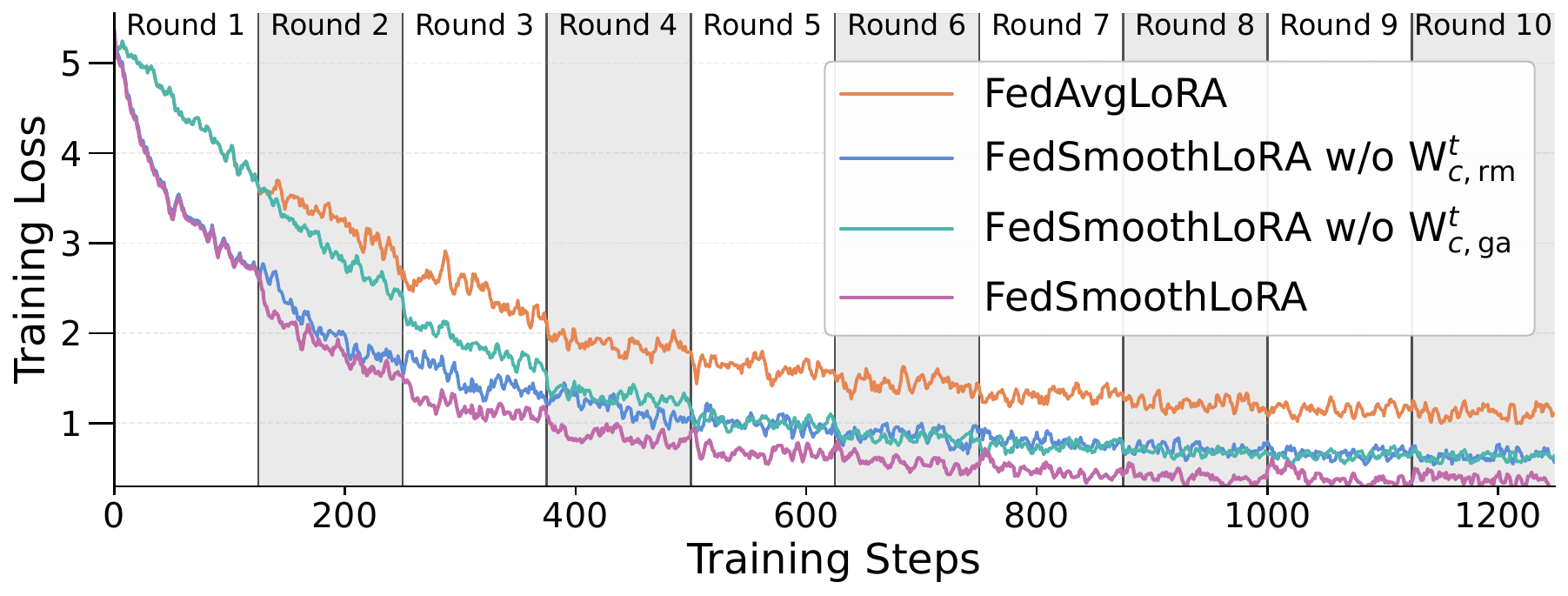}
        \centerline{\small (b) \texttt{IID}}
    \end{minipage}
\caption{Training loss curves of different variants under \texttt{Non-IID} and \texttt{IID} settings. The curves show that \textbf{\texttt{$\boldsymbol{W}_{c,\mathrm{rm}}^{t}$}} stabilizes cross-round optimization, while \textbf{\texttt{$\boldsymbol{\hat{W}}_{c,\mathrm{ga}}^{t}$}} provides a better initialization for local updates.}
    \label{fig:loss_curves}
    \vspace{-1.5em}
\end{figure*}

\subsection{Ablation Study}
\label{sec:4.3}
\quad \begin{wraptable}{r}{0.50\textwidth}
\vspace{-1.0em}
    \centering
    \caption{Ablation study on the proposed components on CIFAR-100 with the ViT-Small model under \texttt{Non-IID} and \texttt{IID} settings. The best results are highlighted in \textbf{bold}.}
    \label{tab:3}
    \resizebox{\linewidth}{!}{
    \begin{tabular}{l|cc}
    \toprule
    \textbf{Method} & \textbf{\texttt{Non-IID}} & \textbf{\texttt{IID}} \\
    \midrule
    FedAvgLoRA                 
    & 57.36\stdvw{0.07}
    & 73.97\stdvw{0.11} \\
    FedSmoothLoRA w/o $\boldsymbol{W}_{c,\mathrm{rm}}^{t}$      
    & 60.92\stdvw{0.17}
    & 81.78\stdvw{0.07} \\
    FedSmoothLoRA w/o $\boldsymbol{\hat{W}}_{c,\mathrm{ga}}^{t}$      
    & 61.79\stdvw{0.24}
    & 82.93\stdvw{0.32} \\
    {\cellcolor{mypink}\textbf{\texttt{FedSmoothLoRA}}}              
    & {\cellcolor{mypink}\textbf{64.46}\stdvw{0.18}}
    & {\cellcolor{mypink}\textbf{84.07}\stdvw{0.13}} \\
    \bottomrule
    \end{tabular}
    }
\end{wraptable}
\noindent{\textbf{Component Analysis of {\texttt{$\boldsymbol{W}_{c,\mathrm{rm}}^{t}$}} and {\texttt{$\boldsymbol{\hat{W}}_{c,\mathrm{ga}}^{t}$}}.}} 
The performance gain of \textbf{\texttt{FedSmoothLoRA}} comes not only from the enlarged effective update space, but also from the two client-side initialization components, namely, \textbf{\texttt{$\boldsymbol{W}_{c,\mathrm{rm}}^{t}$}} and \textbf{\texttt{$\boldsymbol{\hat{W}}_{c,\mathrm{ga}}^{t}$}}.
To verify their individual contributions, we conduct an ablation study in Table~\ref{tab:3} and provide the corresponding training loss curves in Fig.~\ref{fig:loss_curves}. 
Removing \textbf{\texttt{$\boldsymbol{W}_{c,\mathrm{rm}}^{t}$}} reduces the accuracy from $84.07$ to $81.78$ under the \texttt{IID} setting and from $64.46$ to $60.92$ under the \texttt{Non-IID} setting. 
As shown in Fig.~\ref{fig:loss_curves}(b), under \texttt{IID} data partitions, removing this term mainly slows down convergence at the beginning of each new communication round. 
In contrast, under the \texttt{Non-IID} setting in Fig.~\ref{fig:loss_curves}(a), the absence of \textbf{\texttt{$\boldsymbol{W}_{c,\mathrm{rm}}^{t}$}} leads to sharper loss fluctuations across rounds. 
These observations indicate that \textbf{\texttt{$\boldsymbol{W}_{c,\mathrm{rm}}^{t}$}} is important for preserving cross-round optimization continuity and mitigating the inter-round state mismatch in \ding{183}. 
Similarly, removing \textbf{\texttt{$\boldsymbol{\hat{W}}_{c,\mathrm{ga}}^{t}$}} decreases the accuracy from $84.07$ to $82.93$ under the \texttt{IID} setting and from $64.46$ to $61.79$ under the \texttt{Non-IID} setting. 
The corresponding loss curves show that, without this gradient-aligned component, the local loss decreases more slowly at the early stage of each round. 
This suggests that \textbf{\texttt{$\boldsymbol{\hat{W}}_{c,\mathrm{ga}}^{t}$}} provides a more effective starting point for local adaptation by incorporating client-specific optimization signals, thereby helping alleviate the client-agnostic starting-state issue in \ding{184}. 
When both components are enabled, \textbf{\texttt{FedSmoothLoRA}} achieves the best performance under both \texttt{Non-IID} and \texttt{IID} settings. 
These results suggest that the gains of \textbf{\texttt{FedSmoothLoRA}} arise from the joint effect of enlarging the effective update space to address \ding{182}, improving cross-round stability through \textbf{\texttt{$\boldsymbol{W}_{c,\mathrm{rm}}^{t}$}}, and enhancing client-aware local adaptation through \textbf{\texttt{$\boldsymbol{\hat{W}}_{c,\mathrm{ga}}^{t}$}}.

\begin{wraptable}{r}{0.50\textwidth}
    \vspace{-1.4em}
    \centering
    \caption{Comparison of different \(\zeta\) modes in \textbf{\texttt{FedSmoothLoRA}} on CIFAR-100 with the ViT-Small model under \texttt{Non-IID} and \texttt{IID} settings. FedAvgLoRA is reported as the baseline. The better results among \(\zeta\) modes are highlighted in \textbf{bold}.}
    \label{tab:6}
    \resizebox{\linewidth}{!}{
    \begin{tabular}{l|c|cc}
    \toprule
    \textbf{Method} & \textbf{\(\zeta\) Mode} & \textbf{\texttt{Non-IID}} & \textbf{\texttt{IID}} \\
    \midrule
    FedAvgLoRA 
    & -- 
    & 57.36\stdvw{0.07}
    & 73.97\stdvw{0.11} \\
    \midrule
    \multirow{2}{*}{\textbf{\texttt{FedSmoothLoRA}}}
    & \textit{Constant} 
    & 60.13\stdvw{0.64}
    & \textbf{84.07}\stdvw{0.13} \\
    
    & \textit{Decay}    
    & \textbf{64.46}\stdvw{0.18}
    & 82.74\stdvw{0.25} \\
    \bottomrule
    \end{tabular}
    }
    \vspace{-1.0em}
\end{wraptable}
\noindent{\textbf{Choice of \texorpdfstring{$\zeta$}{zeta} Mode for \textbf{\texttt{$\boldsymbol{W}_{c,\mathrm{rm}}^{t}$}}.}}
Table~\ref{tab:6} compares two modes of the coefficient \(\zeta\) in \textbf{\texttt{$\boldsymbol{W}_{c,\mathrm{rm}}^{t}$}}: the \textit{constant mode} and the \textit{decay mode}. 
As shown in Table~\ref{tab:6}, the \textit{constant mode} performs better under the \texttt{IID} setting, while the \textit{decay mode} is more effective under the \texttt{Non-IID} setting. 
Under \texttt{IID}, clients follow relatively consistent optimization directions, so maintaining a stronger Round-Matching term helps preserve cross-round continuity and alleviate \ding{183}. 
In contrast, under \texttt{Non-IID}, client objectives are more heterogeneous, and a fixed large \(\zeta\) may over-preserve the previous local trajectory, making the initialization less adaptive to the current local objective. 
The \textit{decay mode} keeps the stabilizing effect of \textbf{\texttt{$\boldsymbol{W}_{c,\mathrm{rm}}^{t}$}} in early rounds while gradually reducing its influence, allowing \textbf{\texttt{$\boldsymbol{\hat{W}}_{c,\mathrm{ga}}^{t}$}} to better guide the initialization with current local optimization signals. 
Therefore, we adopt the \textit{constant mode} for \texttt{IID} settings and the \textit{decay mode} for \texttt{Non-IID} settings.

\begin{wraptable}{r}{0.50\textwidth}
    \vspace{-1.4em}
    \centering
    \caption{
    Analysis of client-agnostic and client-specific initialization variants in \textbf{\texttt{FedSmoothLoRA}} on CIFAR-100 with ViT-Small.
    FedAvgLoRA is reported as the baseline. The best results are highlighted in \textbf{bold}.
    }
    \label{tab:7}
    \resizebox{\linewidth}{!}{
    \begin{tabular}{l|c|cc}
    \toprule
    \textbf{Method} 
    & \textbf{Client-spec.}
    & \textbf{\texttt{Non-IID}} 
    & \textbf{\texttt{IID}} \\
    \midrule
    FedAvgLoRA
    & No
    & 57.36\stdvw{0.07}
    & 73.97\stdvw{0.11} \\
    \midrule

    \multicolumn{4}{l}{\textbf{\texttt{FedSmoothLoRA}}} \\

    \quad \(\hookrightarrow\) w/ weight-based SVD
    & No
    & 62.25\stdvw{0.23}
    & 83.21\stdvw{0.14} \\

    \quad \(\hookrightarrow\) w/ shared-gradient SVD\textsuperscript{*}
    & No
    & 63.86\stdvw{0.15}
    & 83.98\stdvw{0.06} \\

    {\cellcolor{mypink}\quad \(\hookrightarrow\) w/ client-specific $\boldsymbol{\hat{W}}_{c,\mathrm{ga}}^{t}$}
    & {\cellcolor{mypink}Yes}
    & {\cellcolor{mypink}\textbf{64.46}\stdvw{0.18}}
    & {\cellcolor{mypink}\textbf{84.07}\stdvw{0.13}} \\
    \bottomrule
    \end{tabular}
    }

    \vspace{0.2em}
    \parbox{\linewidth}{\footnotesize
    \textsuperscript{*}Idealized diagnostic baseline requiring shared client gradients; not intended for practical federated deployment.
    }
    \vspace{-1.0em}
\end{wraptable}

\noindent{\textbf{Effect of Client-Specific $\boldsymbol{\hat{W}}_{c,\mathrm{ga}}^{t}$.}}
Table~\ref{tab:7} compares client-agnostic and client-specific initialization variants for \textbf{\texttt{FedSmoothLoRA}}.
The weight-based SVD variant follows the initialization strategy used in FRLoRA, which derives the LoRA starting point from the shared model-weight structure and is therefore client-agnostic.
We further include shared-gradient SVD as an idealized diagnostic baseline to examine the effect of gradient-based initialization in a client-agnostic form.
As shown in Table~\ref{tab:7}, shared-gradient SVD consistently outperforms weight-based SVD under both \texttt{Non-IID} and \texttt{IID} settings.
This is consistent with observations in centralized LoRA-GA~\cite{wang2024loraga}, indicating that gradient-based SVD provides a more effective initialization than weight-based SVD.
Moreover, the client-specific \(\boldsymbol{\hat{W}}_{c,\mathrm{ga}}^{t}\) further improves over shared-gradient SVD, especially under the \texttt{Non-IID} setting.
This suggests that, in federated learning, gradient-based SVD not only inherits the advantage observed in centralized settings, namely providing a more optimization-aware initialization than weight-based SVD, but also brings an additional benefit when adapted to each client's local data.
By constructing the gradient-aligned initialization from each client's own local gradient signal, FedSmoothLoRA provides a more personalized starting point for heterogeneous local objectives and achieves better performance.

\section{Conclusion}

This paper proposes \textbf{FedSmoothLoRA}, a federated LoRA tuning framework that improves local training stability and client-aware adaptation across communication rounds. 
FedSmoothLoRA introduces two complementary client-side initialization designs. 
The Round-Matching matrix $\boldsymbol{W}_{c,\mathrm{rm}}^{t}$ aligns each new round with the previous local training state, thereby reducing inter-round state mismatch and suppressing abrupt parameter shifts.
The Gradient-Aligned matrix $\boldsymbol{\hat{W}}_{c,\mathrm{ga}}^{t}$ constructs the local LoRA starting state from full-model gradients estimated on a small local mini-batch, thereby incorporating client-specific optimization signals into the initialization process.
Together, $\boldsymbol{W}_{c,\mathrm{rm}}^{t}$ and $\boldsymbol{\hat{W}}_{c,\mathrm{ga}}^{t}$ enable smoother training dynamics, faster convergence, and stronger final performance. 
Extensive experiments on vision and natural language generation tasks validate the effectiveness of FedSmoothLoRA under both IID and Non-IID federated settings.

\bibliographystyle{unsrt}
\bibliography{neurips_2026}
\normalsize
\clearpage



\appendix

This appendix provides additional details and analyses for FedSmoothLoRA.
It includes:
\begin{itemize}
    \item Detailed experimental settings, provided in Section~\ref{app:exp_set}.
    \item Additional analysis of time, memory, and communication costs, summarized in Section~\ref{app:cost}.
    \item Partial participation experiments, presented in Section~\ref{app:part}.
    \item Analysis of inter-round state consistency and the role of $\boldsymbol{W}_{c,\mathrm{rm}}^{t}$, provided in Section~\ref{app:inter_round_consistency}.
    \item Explanation of server aggregation in federated LoRA, discussed in Section~\ref{app:aggregation}.
    \item Scalability analysis, shown in Section~\ref{app:scalability}.
    \item Limitations, discussed in Section~\ref{app:limit}.
    \item Compute resources, summarized in Section~\ref{app:resource}.
    \item Broader impacts, discussed in Section~\ref{app:broader}.
\end{itemize}

\section{Details of Experimental Settings}
\label{app:exp_set}

\subsection{Image Classification}

For the image classification experiments, we evaluate all methods on CIFAR-100 with 5 clients under both \texttt{IID} and label-skewed \texttt{Non-IID} partitions.
The 50,000 training samples are evenly assigned to clients in the \texttt{IID} setting.
For the \texttt{Non-IID} setting, we generate client label distributions using a Dirichlet distribution with concentration parameter \(\beta=0.1\).
Each client further splits its local data into 80\% for training and 20\% for validation.
All experiments are repeated over three runs, and we report the average performance together with the standard deviation.

We use ViT-Small pretrained on ImageNet-21k as the backbone.
LoRA is applied to the attention qkv layers and the classification head.
The LoRA rank is set to \(r=2\), and the scaling factor is set to \(\alpha=4\).
Each client trains for one local epoch per communication round.
We use the SGD optimizer with a cosine learning-rate schedule and an initial learning rate of 0.01.
The local training batch size is 64.
For FedSmoothLoRA, the stabilization hyperparameter in \(\boldsymbol{\hat{W}}_{c,\mathrm{ga}}^{t}\) is set to \(\gamma=256\), and the calibration mini-batch used to estimate the local full-model gradient contains 8 samples.
The coefficient \(\zeta\) is set to 1 under the \texttt{IID} setting and follows a cosine decay schedule from 1 to 0.6 under the \texttt{Non-IID} setting.

\subsection{Natural Language Generation}

For natural language generation, we evaluate FedSmoothLoRA on math reasoning, code generation, and multilingual chat tasks using LLaMA-3.2-1B as the backbone.
Unless otherwise specified, all experiments are trained for 10 communication rounds with 200 local training steps per round.
LoRA is applied to all linear layers, with rank \(r=32\) and scaling factor \(\alpha=64\).
All experiments are repeated over three runs, and we report the average performance together with the standard deviation.

For the math task, the model is trained on a 100k subset of MetaMathQA and evaluated on GSM8K using accuracy.
For the code task, the model is trained on a 100k subset of Code-Feedback and evaluated on HumanEval using Pass@1.
For the multilingual chat task, the model is trained on the Aya dataset under a language-skewed \texttt{Non-IID} setting, where each client corresponds to a different language distribution.
We evaluate the chat model on the Aya open-ended multilingual benchmark provided by FedLLM-Bench.
The evaluation set contains 140 instruction-reference pairs across 7 languages, including English, Standard Arabic, Russian, Simplified Chinese, Portuguese, French, and Spanish, with 20 samples per language.
The evaluation protocol consists of three stages: 
(1) generating model responses for all test instructions; 
(2) using an LLM-as-a-Judge scorer, with google/gemini-2.5-flash-lite as the default judge, to assign scores from 1 to 10 based on language consistency, instruction fulfillment, semantic and factual comprehensibility, and grammatical fluency; and 
(3) aggregating per-language mean scores and the overall mean score as the final performance metrics.

We use the AdamW optimizer with \(\beta_1=0.9\) and \(\beta_2=0.999\).
The learning rate follows a cosine schedule from \(2\times10^{-5}\) to \(10^{-6}\).
The batch size is 32 with gradient accumulation of 4.
The maximum sequence length is set to 512 for math and 1024 for code and chat tasks.
For FedSmoothLoRA, the stabilization hyperparameter in \(\boldsymbol{\hat{W}}_{c,\mathrm{ga}}^{t}\) is set to \(\gamma=256\), and the calibration mini-batch size used for gradient estimation is set to 128 unless otherwise specified.
\section{Time and Memory Cost}
\label{app:cost}

FedSmoothLoRA introduces additional computation mainly from two operations:
(1) estimating a local full-model gradient on a small calibration mini-batch for $\boldsymbol{\hat{W}}_{c,\mathrm{ga}}^{t}$, and
(2) performing low-rank SVD approximation for client-side initialization and server-side aggregation.
In practice, these operations introduce only a small overhead compared with local fine-tuning.

To reduce the cost of SVD on large matrices, we use randomized low-rank SVD following~\cite{halko2011finding}, implemented by \texttt{torch.svdlowrank}.
In our experiments, we set the number of iterations to 8 and the oversampling parameter to \(4r\).
This provides a practical trade-off between approximation quality and computational cost.

Table~\ref{tab:cost_time_memory} reports the time and memory cost on the math task.
FedSmoothLoRA introduces extra computation for gradient-aligned initialization and client-side SVD operations.
However, compared with the total local training time, these costs are relatively small.
Moreover, the peak memory cost remains comparable to FedAvgLoRA in our implementation.

\begin{table}[htbp]
    \centering
    \caption{Time and memory cost on the math task, evaluated on an A6000 GPU and 64 Intel(R) Xeon(R) Gold 6226R CPUs @ 2.90GHz.}
    \begin{tabular}{l|c|ccc|c}
    \toprule
    \multirow{2}{*}{Method}
    & \multirow{2}{*}{\begin{tabular}{c}Server\\Fast SVD\end{tabular}}
    & \multicolumn{3}{c|}{Time Cost of Each Client}
    & \multirow{2}{*}{\begin{tabular}{c}Memory Cost\\of Each Client\end{tabular}} \\
    \cline{3-5}
    & & Fast SVD & GA Initialization & Training & \\
    \midrule
    FedAvgLoRA
    & -- & -- & -- & \textasciitilde 7min & \textasciitilde 34GB \\
    FedSmoothLoRA
    & 2s & 2s $\times$ 2 & 18s & \textasciitilde 7min & \textasciitilde 34GB \\
    \bottomrule
    \end{tabular}
    \label{tab:cost_time_memory}
\end{table}

We further study the influence of the number of SVD iterations.
As shown in Table~\ref{tab:svd_iterations}, increasing the number of SVD iterations slightly improves the final validation loss, but also introduces additional training time.
Since the performance gain becomes marginal when using more iterations, we set the number of SVD iterations to 8 in our experiments.

\begin{table}[htbp]
    \centering
    \caption{Effect of the number of randomized SVD iterations in FedSmoothLoRA on the math task.}
    \begin{tabular}{l|cccc}
        \toprule
        Number of SVD Iterations & 8 & 16 & 32 & 64 \\
        \midrule
        Total Training Time
        & \textasciitilde 3h39min
        & \textasciitilde 3h41min
        & \textasciitilde 3h43min
        & \textasciitilde 3h45min \\
        Final Validation Loss
        & 0.2458
        & 0.2457
        & 0.2454
        & 0.2453 \\
        \bottomrule
    \end{tabular}
    \label{tab:svd_iterations}
\end{table}
\section{FedSmoothLoRA in Partial Participation Setting}
\label{app:part}

FedSmoothLoRA can also be extended to the partial device participation setting, where only a subset of clients participate in each communication round.
Different from the full participation setting, a client may remain inactive for several rounds before rejoining training.
Therefore, when an inactive client returns, the Round-Matching matrix should account for the accumulated server-side updates during its inactive period.

Let \(\hat{t}_c < t\) denote the most recent round before round \(t\) in which client \(c\) participated.
When client \(c\) rejoins training at round \(t\), the Round-Matching matrix is defined as
\begin{flalign}
\boldsymbol{W}_{c,\mathrm{rm}}^{t}
\leftarrow
\boldsymbol{B}_{c}^{\hat{t}_c}\boldsymbol{A}_{c}^{\hat{t}_c}
-
\sum_{\tau=\hat{t}_c+1}^{t}
\boldsymbol{B}_{s}^{\tau}\boldsymbol{A}_{s}^{\tau},
\label{eq:partial_rm}
\end{flalign}
where \(\hat{t}_c\) represents the last active round of client \(c\).
This formulation aligns the retained client-side LoRA state with the accumulated server-side LoRA updates during the inactive period, thereby reducing state mismatch when the client rejoins training.

We further evaluate FedSmoothLoRA under partial device participation on CIFAR-100.
The experimental setup follows Section~\ref{sec:4.1}, except that 3 out of 5 clients are randomly selected for local training in each communication round.
As shown in Table~\ref{tab:partial_participation}, FedSmoothLoRA continues to outperform FedAvgLoRA under partial participation, indicating that the proposed initialization strategy remains effective when only a subset of clients is active in each round.

\begin{table}[htbp]
    \centering
    \caption{
    Experiments under partial device participation on CIFAR-100 with the ViT-Small model.
    In each communication round, 3 out of 5 clients are randomly selected for local training.
    The best result excluding FedAvg (Full Finetune) is highlighted in \textbf{bold}.
    }
    \label{tab:partial_participation}
    \begin{tabular}{lcccc}
        \toprule
        \textbf{Method}
        & \textbf{Active}
        & \textbf{Inactive}
        & \textbf{Total}
        & \textbf{Accuracy} \\
        \midrule
        FedAvg (Full Finetune)
        & \(83\mathrm{MB} \times 2 \times 3\)
        & --
        & \(83\mathrm{MB} \times 6\)
        & 90.20 \\
        
        FedAvgLoRA
        & \(157\mathrm{KB} \times 2 \times 3\)
        & --
        & \(157\mathrm{KB} \times 6\)
        & 73.74 \\

        \textbf{\texttt{FedSmoothLoRA}}
        & \(157\mathrm{KB} \times 2 \times 3\)
        & \(157\mathrm{KB} \times 1 \times 2\)
        & \(157\mathrm{KB} \times 8\)
        & \textbf{82.60} \\
        \bottomrule
    \end{tabular}
\end{table}

\section{Mechanistic Analysis of Inter-Round State Continuity}
\label{app:inter_round_consistency}

In this section, we provide a mechanistic analysis of how the Round-Matching matrix 
\(\boldsymbol{W}_{c,\mathrm{rm}}^{t}\) improves inter-round state continuity in FedSmoothLoRA.
Rather than aiming to establish a full convergence guarantee, this analysis characterizes the discrepancy between the effective starting state of a client at the current round and its effective local endpoint from the previous round.
For clarity, we focus on the full-participation setting and consider a client \(c\) at round \(t>0\).
The same analysis can be extended to partial participation by replacing round \(t-1\) with the most recent round in which client \(c\) participated.

To make the effect of \(\boldsymbol{W}_{c,\mathrm{rm}}^{t}\) explicit, we compare two effective model states that determine the discontinuity observed across communication rounds:
\begin{itemize}
    \item \(\boldsymbol{S}_c^t\): the effective starting state from which client \(c\) begins local optimization at round \(t\);
    \item \(\boldsymbol{E}_c^{t-1}\): the effective endpoint reached by client \(c\) after local optimization at round \(t-1\).
\end{itemize}

\noindent\textbf{Definition of the starting state.}
At round \(t\), FedSmoothLoRA performs local training with the shifted backbone 
\(\boldsymbol{W}_c^t-s\hat{\boldsymbol{W}}_{c,\mathrm{ga}}^t\) 
and the initialized LoRA factors 
\((\boldsymbol{B}_{c,\mathrm{init}}^t,\boldsymbol{A}_{c,\mathrm{init}}^t)\). 
Therefore, the effective starting state is defined as
\begin{flalign}
\boldsymbol{S}_c^t
:=
\boldsymbol{W}_c^t
-
s\hat{\boldsymbol{W}}_{c,\mathrm{ga}}^t
+
s\boldsymbol{B}_{c,\mathrm{init}}^t
\boldsymbol{A}_{c,\mathrm{init}}^t .
\label{eq:app_start_state_def}
\end{flalign}

\noindent\textbf{Definition of the endpoint state.}
At round \(t-1\), local training is performed with the shifted backbone 
\(\boldsymbol{W}_c^{t-1}-s\hat{\boldsymbol{W}}_{c,\mathrm{ga}}^{t-1}\). 
Let 
\((\widetilde{\boldsymbol{B}}_c^{t-1},\widetilde{\boldsymbol{A}}_c^{t-1})\) 
denote the temporary LoRA factors obtained immediately after local optimization but before the final SVD projection. 
The effective local endpoint is therefore defined as
\begin{flalign}
\boldsymbol{E}_c^{t-1}
:=
\boldsymbol{W}_c^{t-1}
-
s\hat{\boldsymbol{W}}_{c,\mathrm{ga}}^{t-1}
+
s\widetilde{\boldsymbol{B}}_c^{t-1}
\widetilde{\boldsymbol{A}}_c^{t-1}.
\label{eq:app_end_state_def}
\end{flalign}

The following proposition decomposes the discrepancy between these two states and makes the role of the Round-Matching matrix explicit.

\begin{proposition}[Inter-round state-discrepancy decomposition]
\label{prop:inter_round_consistency}
Let \(\boldsymbol{S}_c^t\) and \(\boldsymbol{E}_c^{t-1}\) be defined by Eq.~\eqref{eq:app_start_state_def} and Eq.~\eqref{eq:app_end_state_def}.
In FedSmoothLoRA, \(s=\alpha/\sqrt{r}>0\) and the coefficient \(\zeta\) is chosen from \([0,1]\) in both the constant and decay modes. Therefore, \(s(1-\zeta)\ge 0\).
Then their discrepancy satisfies
\begin{flalign}
\boldsymbol{S}_c^t
-
\boldsymbol{E}_c^{t-1}
=
s(1-\zeta)
\left(
\boldsymbol{B}_s^t\boldsymbol{A}_s^t
-
\boldsymbol{B}_c^{t-1}\boldsymbol{A}_c^{t-1}
\right)
+
s\boldsymbol{\epsilon}_{c,\mathrm{init}}^t
+
s\boldsymbol{\epsilon}_{c,\mathrm{end}}^{t-1},
\label{eq:app_state_gap}
\end{flalign}
where \(\boldsymbol{\epsilon}_{c,\mathrm{init}}^t\) is the rank-\(r\) SVD approximation error in the initialization step at round \(t\), and \(\boldsymbol{\epsilon}_{c,\mathrm{end}}^{t-1}\) is the rank-\(r\) SVD approximation error in the final client-side projection at the end of round \(t-1\). 
Consequently,
\begin{flalign}
\left\|
\boldsymbol{S}_c^t
-
\boldsymbol{E}_c^{t-1}
\right\|_F
\leq
s(1-\zeta)
\left\|
\boldsymbol{B}_s^t\boldsymbol{A}_s^t
-
\boldsymbol{B}_c^{t-1}\boldsymbol{A}_c^{t-1}
\right\|_F
+
s\left\|
\boldsymbol{\epsilon}_{c,\mathrm{init}}^t
\right\|_F
+
s\left\|
\boldsymbol{\epsilon}_{c,\mathrm{end}}^{t-1}
\right\|_F .
\label{eq:app_state_gap_bound}
\end{flalign}
\end{proposition}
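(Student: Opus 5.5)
The plan is a direct algebraic expansion of both states. The only real choice is how to define the two SVD error matrices so that the signs in Eq.~\eqref{eq:app_state_gap} come out exactly as stated. I will use the convention ``rank-$r$ output minus target'' for both projections.

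\textbf{Step 1: define the errors.} For the initialization at round $t$, Eq.~\eqref{eq:init_general} gives $(\boldsymbol{B}_{c,\mathrm{init}}^t,\boldsymbol{A}_{c,\mathrm{init}}^t)=\operatorname{SVDApprox}(\boldsymbol{W}_{c,\mathrm{init}}^t;r)$. I set
\[
\boldsymbol{\epsilon}_{c,\mathrm{init}}^t:=\boldsymbol{B}_{c,\mathrm{init}}^t\boldsymbol{A}_{c,\mathrm{init}}^t-\hat{\boldsymbol{W}}_{c,\mathrm{ga}}^t-\zeta\boldsymbol{W}_{c,\mathrm{rm}}^t,
\]
using Eq.~\eqref{eq:init_matrix}. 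For the final client projection at round $t-1$, Eq.~\eqref{eq:client_update} gives $(\boldsymbol{B}_c^{t-1},\boldsymbol{A}_c^{t-1})=\operatorname{SVDApprox}(\widetilde{\boldsymbol{B}}_c^{t-1}\widetilde{\boldsymbol{A}}_c^{t-1}-\hat{\boldsymbol{W}}_{c,\mathrm{ga}}^{t-1};r)$. I set
\[
\boldsymbol{\epsilon}_{c,\mathrm{end}}^{t-1}:=\boldsymbol{B}_c^{t-1}\boldsymbol{A}_c^{t-1}-\bigl(\widetilde{\boldsymbol{B}}_c^{t-1}\widetilde{\boldsymbol{A}}_c^{t-1}-\hat{\boldsymbol{W}}_{c,\mathrm{ga}}^{t-1}\bigr).
\]
Both are genuinely the rank-$r$ truncation residuals of Definition~\ref{rank-r_svd}.

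\textbf{Step 2: rewrite the starting state.} Substitute $s\boldsymbol{B}_{c,\mathrm{init}}^t\boldsymbol{A}_{c,\mathrm{init}}^t=s\hat{\boldsymbol{W}}_{c,\mathrm{ga}}^t+s\zeta\boldsymbol{W}_{c,\mathrm{rm}}^t+s\boldsymbol{\epsilon}_{c,\mathrm{init}}^t$ into Eq.~\eqref{eq:app_start_state_def}. The $\hat{\boldsymbol{W}}_{c,\mathrm{ga}}^t$ terms cancel, which is the purpose of the shifted backbone. Then use the backbone merge $\boldsymbol{W}_c^t=\boldsymbol{W}_c^{t-1}+s\boldsymbol{B}_s^t\boldsymbol{A}_s^t$ (valid for $t>0$) and Eq.~\eqref{eq:rci} for $\boldsymbol{W}_{c,\mathrm{rm}}^t$. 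This gives
\[
\boldsymbol{S}_c^t=\boldsymbol{W}_c^{t-1}+s\boldsymbol{B}_s^t\boldsymbol{A}_s^t+s\zeta\bigl(\boldsymbol{B}_c^{t-1}\boldsymbol{A}_c^{t-1}-\boldsymbol{B}_s^t\boldsymbol{A}_s^t\bigr)+s\boldsymbol{\epsilon}_{c,\mathrm{init}}^t .
\]

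\textbf{Step 3: rewrite the endpoint.} By the definition of $\boldsymbol{\epsilon}_{c,\mathrm{end}}^{t-1}$, $\widetilde{\boldsymbol{B}}_c^{t-1}\widetilde{\boldsymbol{A}}_c^{t-1}-\hat{\boldsymbol{W}}_{c,\mathrm{ga}}^{t-1}=\boldsymbol{B}_c^{t-1}\boldsymbol{A}_c^{t-1}-\boldsymbol{\epsilon}_{c,\mathrm{end}}^{t-1}$. Substituting into Eq.~\eqref{eq:app_end_state_def} gives
\[
\boldsymbol{E}_c^{t-1}=\boldsymbol{W}_c^{t-1}+s\boldsymbol{B}_c^{t-1}\boldsymbol{A}_c^{t-1}-s\boldsymbol{\epsilon}_{c,\mathrm{end}}^{t-1}.
\]

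\textbf{Step 4: subtract.} The backbone $\boldsymbol{W}_c^{t-1}$ cancels; this is where full participation is used, since the client's backbone is untouched between its round $t-1$ training and the round $t$ merge. The remaining low-rank terms collect to $s(1-\zeta)(\boldsymbol{B}_s^t\boldsymbol{A}_s^t-\boldsymbol{B}_c^{t-1}\boldsymbol{A}_c^{t-1})$, and the error terms give $+s\boldsymbol{\epsilon}_{c,\mathrm{init}}^t+s\boldsymbol{\epsilon}_{c,\mathrm{end}}^{t-1}$. This is Eq.~\eqref{eq:app_state_gap}.

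\textbf{Step 5: the norm bound.} Eq.~\eqref{eq:app_state_gap_bound} follows from the triangle inequality for $\|\cdot\|_F$ and absolute homogeneity. The scalar $s(1-\zeta)$ can be pulled out without absolute value because $s>0$ and $\zeta\in[0,1]$, as noted in the statement. Optionally, Eckart--Young--Mirsky identifies each $\|\boldsymbol{\epsilon}\|_F$ as the tail singular-value energy of the corresponding target matrix, but the stated result does not need this.

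The only delicate point is bookkeeping rather than analysis. The sign convention for $\boldsymbol{\epsilon}_{c,\mathrm{end}}^{t-1}$ must be fixed so it enters with a plus sign; the opposite convention would flip it, which only matters for the identity, not the bound. One must also confirm that the $\hat{\boldsymbol{W}}_{c,\mathrm{ga}}$ shifts cancel within each round separately, since $\hat{\boldsymbol{W}}_{c,\mathrm{ga}}^{t}$ and $\hat{\boldsymbol{W}}_{c,\mathrm{ga}}^{t-1}$ differ. For partial participation, the same computation with the accumulated server sum in Eq.~\eqref{eq:rm_partial} would telescope the backbone merges in the same way.
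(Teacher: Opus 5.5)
Your proposal is correct and follows the paper's proof essentially step for step. It uses the same ``output minus target'' conventions for $\boldsymbol{\epsilon}_{c,\mathrm{init}}^t$ and $\boldsymbol{\epsilon}_{c,\mathrm{end}}^{t-1}$, the same cancellation of $\hat{\boldsymbol{W}}_{c,\mathrm{ga}}^t$ through the shifted backbone, and the same substitution of the backbone merge and Round-Matching definition before subtracting, with the norm bound then following from the triangle inequality together with $s(1-\zeta)\ge 0$.
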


\noindent\textbf{Proof.}
We prove the proposition by separately deriving the starting state at round \(t\), the endpoint state at round \(t-1\), and their difference.

\noindent\emph{Step 1: Deriving the starting state at round \(t\).}
At the beginning of round \(t\), client \(c\) merges the current server-side LoRA update into its local backbone:
\begin{flalign}
\boldsymbol{W}_c^t
=
\boldsymbol{W}_c^{t-1}
+
s\boldsymbol{B}_s^t\boldsymbol{A}_s^t .
\label{eq:app_client_backbone}
\end{flalign}
The Round-Matching matrix is defined as
\begin{flalign}
\boldsymbol{W}_{c,\mathrm{rm}}^t
=
\boldsymbol{B}_c^{t-1}\boldsymbol{A}_c^{t-1}
-
\boldsymbol{B}_s^t\boldsymbol{A}_s^t .
\label{eq:app_rm_def}
\end{flalign}
The final initialization matrix is
\begin{flalign}
\boldsymbol{W}_{c,\mathrm{init}}^t
=
\hat{\boldsymbol{W}}_{c,\mathrm{ga}}^t
+
\zeta
\boldsymbol{W}_{c,\mathrm{rm}}^t .
\label{eq:app_init_matrix}
\end{flalign}
The initialized LoRA factors are obtained by a rank-\(r\) SVD approximation:
\begin{flalign}
(\boldsymbol{B}_{c,\mathrm{init}}^t,\boldsymbol{A}_{c,\mathrm{init}}^t)
=
\operatorname{SVDApprox}
\left(
\boldsymbol{W}_{c,\mathrm{init}}^t;
r
\right).
\label{eq:app_init_svd}
\end{flalign}
We define the initialization SVD approximation error as
\begin{flalign}
\boldsymbol{\epsilon}_{c,\mathrm{init}}^t
:=
\boldsymbol{B}_{c,\mathrm{init}}^t
\boldsymbol{A}_{c,\mathrm{init}}^t
-
\boldsymbol{W}_{c,\mathrm{init}}^t .
\label{eq:app_init_error}
\end{flalign}
Equivalently,
\begin{flalign}
\boldsymbol{B}_{c,\mathrm{init}}^t
\boldsymbol{A}_{c,\mathrm{init}}^t
=
\hat{\boldsymbol{W}}_{c,\mathrm{ga}}^t
+
\zeta
\boldsymbol{W}_{c,\mathrm{rm}}^t
+
\boldsymbol{\epsilon}_{c,\mathrm{init}}^t .
\label{eq:app_init_product}
\end{flalign}
Substituting Eq.~\eqref{eq:app_init_product} into Eq.~\eqref{eq:app_start_state_def}, we obtain
\begin{flalign}
\boldsymbol{S}_c^t
&=
\boldsymbol{W}_c^t
-
s\hat{\boldsymbol{W}}_{c,\mathrm{ga}}^t
+
s
\left(
\hat{\boldsymbol{W}}_{c,\mathrm{ga}}^t
+
\zeta
\boldsymbol{W}_{c,\mathrm{rm}}^t
+
\boldsymbol{\epsilon}_{c,\mathrm{init}}^t
\right)
\nonumber\\
&=
\boldsymbol{W}_c^t
+
s\zeta
\boldsymbol{W}_{c,\mathrm{rm}}^t
+
s\boldsymbol{\epsilon}_{c,\mathrm{init}}^t .
\label{eq:app_start_simplified}
\end{flalign}
Then, substituting Eq.~\eqref{eq:app_client_backbone} and Eq.~\eqref{eq:app_rm_def} into Eq.~\eqref{eq:app_start_simplified}, we get
\begin{flalign}
\boldsymbol{S}_c^t
&=
\boldsymbol{W}_c^{t-1}
+
s\boldsymbol{B}_s^t\boldsymbol{A}_s^t
+
s\zeta
\left(
\boldsymbol{B}_c^{t-1}\boldsymbol{A}_c^{t-1}
-
\boldsymbol{B}_s^t\boldsymbol{A}_s^t
\right)
+
s\boldsymbol{\epsilon}_{c,\mathrm{init}}^t
\nonumber\\
&=
\boldsymbol{W}_c^{t-1}
+
s\zeta
\boldsymbol{B}_c^{t-1}\boldsymbol{A}_c^{t-1}
+
s(1-\zeta)
\boldsymbol{B}_s^t\boldsymbol{A}_s^t
+
s\boldsymbol{\epsilon}_{c,\mathrm{init}}^t .
\label{eq:app_start_expanded}
\end{flalign}

\noindent\emph{Step 2: Deriving the endpoint state at round \(t-1\).}
After local optimization at round \(t-1\), client \(c\) obtains temporary LoRA factors 
\((\widetilde{\boldsymbol{B}}_c^{t-1},\widetilde{\boldsymbol{A}}_c^{t-1})\). 
Before uploading and retaining the local update, FedSmoothLoRA performs the final SVD projection:
\begin{flalign}
(\boldsymbol{B}_c^{t-1},\boldsymbol{A}_c^{t-1})
=
\operatorname{SVDApprox}
\left(
\widetilde{\boldsymbol{B}}_c^{t-1}
\widetilde{\boldsymbol{A}}_c^{t-1}
-
\hat{\boldsymbol{W}}_{c,\mathrm{ga}}^{t-1};
r
\right).
\label{eq:app_end_svd}
\end{flalign}
Define the final SVD approximation error as
\begin{flalign}
\boldsymbol{\epsilon}_{c,\mathrm{end}}^{t-1}
:=
\boldsymbol{B}_c^{t-1}\boldsymbol{A}_c^{t-1}
-
\left(
\widetilde{\boldsymbol{B}}_c^{t-1}
\widetilde{\boldsymbol{A}}_c^{t-1}
-
\hat{\boldsymbol{W}}_{c,\mathrm{ga}}^{t-1}
\right).
\label{eq:app_end_error}
\end{flalign}
Equivalently,
\begin{flalign}
\widetilde{\boldsymbol{B}}_c^{t-1}
\widetilde{\boldsymbol{A}}_c^{t-1}
-
\hat{\boldsymbol{W}}_{c,\mathrm{ga}}^{t-1}
=
\boldsymbol{B}_c^{t-1}\boldsymbol{A}_c^{t-1}
-
\boldsymbol{\epsilon}_{c,\mathrm{end}}^{t-1}.
\label{eq:app_end_relation}
\end{flalign}
Substituting Eq.~\eqref{eq:app_end_relation} into the endpoint definition in Eq.~\eqref{eq:app_end_state_def}, we obtain
\begin{flalign}
\boldsymbol{E}_c^{t-1}
&=
\boldsymbol{W}_c^{t-1}
+
s
\left(
\widetilde{\boldsymbol{B}}_c^{t-1}
\widetilde{\boldsymbol{A}}_c^{t-1}
-
\hat{\boldsymbol{W}}_{c,\mathrm{ga}}^{t-1}
\right)
\nonumber\\
&=
\boldsymbol{W}_c^{t-1}
+
s\boldsymbol{B}_c^{t-1}\boldsymbol{A}_c^{t-1}
-
s\boldsymbol{\epsilon}_{c,\mathrm{end}}^{t-1}.
\label{eq:app_end_rewrite}
\end{flalign}

\noindent\emph{Step 3: Deriving the discrepancy between the two states.}
Combining Eq.~\eqref{eq:app_start_expanded} and Eq.~\eqref{eq:app_end_rewrite}, we have
\begin{flalign}
\boldsymbol{S}_c^t
-
\boldsymbol{E}_c^{t-1}
&=
\left[
\boldsymbol{W}_c^{t-1}
+
s\zeta
\boldsymbol{B}_c^{t-1}\boldsymbol{A}_c^{t-1}
+
s(1-\zeta)
\boldsymbol{B}_s^t\boldsymbol{A}_s^t
+
s\boldsymbol{\epsilon}_{c,\mathrm{init}}^t
\right]
\nonumber\\
&\quad -
\left[
\boldsymbol{W}_c^{t-1}
+
s\boldsymbol{B}_c^{t-1}\boldsymbol{A}_c^{t-1}
-
s\boldsymbol{\epsilon}_{c,\mathrm{end}}^{t-1}
\right]
\nonumber\\
&=
s(1-\zeta)
\left(
\boldsymbol{B}_s^t\boldsymbol{A}_s^t
-
\boldsymbol{B}_c^{t-1}\boldsymbol{A}_c^{t-1}
\right)
+
s\boldsymbol{\epsilon}_{c,\mathrm{init}}^t
+
s\boldsymbol{\epsilon}_{c,\mathrm{end}}^{t-1}.
\label{eq:app_gap_derivation}
\end{flalign}
This proves Eq.~\eqref{eq:app_state_gap}. 
Applying the triangle inequality to Eq.~\eqref{eq:app_gap_derivation} gives Eq.~\eqref{eq:app_state_gap_bound}.

Eq.~\eqref{eq:app_state_gap_bound} provides a mechanistic decomposition of the inter-round state discrepancy.
The first term corresponds to the mismatch between the current server-side LoRA update and the previous client-side LoRA update, scaled by \(1-\zeta\).
The remaining two terms are caused by the rank-\(r\) SVD approximations used in the initialization step and in the final client-side projection, respectively.

The two approximation terms,
\(\boldsymbol{\epsilon}_{c,\mathrm{init}}^t\) and 
\(\boldsymbol{\epsilon}_{c,\mathrm{end}}^{t-1}\), reflect the information loss introduced when projecting the corresponding full matrices back to rank-\(r\) LoRA forms.
When these approximation errors are small, the dominant part of the inter-round discrepancy is controlled by
\begin{flalign}
s(1-\zeta)
\left(
\boldsymbol{B}_s^t\boldsymbol{A}_s^t
-
\boldsymbol{B}_c^{t-1}\boldsymbol{A}_c^{t-1}
\right).
\end{flalign}
Therefore, a larger \(\zeta\) more strongly preserves the previous client-side local state and reduces the mismatch between the next-round starting state and the previous-round local endpoint.

In the idealized case where the SVD approximation errors vanish, i.e.,
\(\boldsymbol{\epsilon}_{c,\mathrm{init}}^t=\boldsymbol{0}\) and
\(\boldsymbol{\epsilon}_{c,\mathrm{end}}^{t-1}=\boldsymbol{0}\), the discrepancy reduces to
\begin{flalign}
\boldsymbol{S}_c^t
-
\boldsymbol{E}_c^{t-1}
=
s(1-\zeta)
\left(
\boldsymbol{B}_s^t\boldsymbol{A}_s^t
-
\boldsymbol{B}_c^{t-1}\boldsymbol{A}_c^{t-1}
\right).
\end{flalign}
If we further set \(\zeta=1\), then
\begin{flalign}
\boldsymbol{S}_c^t
=
\boldsymbol{E}_c^{t-1}.
\end{flalign}
This identity does not by itself imply a convergence guarantee, but it clarifies the role of \(\boldsymbol{W}_{c,\mathrm{rm}}^{t}\): it reduces inter-round state mismatch by aligning the starting state of the next round with the effective local endpoint of the previous round.

This analysis also explains the choice of \(\zeta\) mode used in FedSmoothLoRA.
Under the \texttt{IID} setting, clients tend to follow relatively consistent optimization trajectories, so we use the constant mode with \(\zeta=1\) to fully preserve cross-round state continuity.
Under the \texttt{Non-IID} setting, however, different clients may follow more heterogeneous local objectives.
In this case, using a fixed large \(\zeta\) may over-constrain the current local update toward the previous local trajectory.
Therefore, we use a decay mode for \(\zeta\), which preserves the stabilizing effect of Round-Matching in early rounds while gradually reducing its influence, allowing the Gradient-Aligned matrix to better adapt the initialization to the current client-specific optimization signal.
This provides a mechanistic explanation for the smoother training dynamics observed in our experiments.

\section{Explanation of Server Aggregation in Federated LoRA}
\label{app:aggregation}

FedSmoothLoRA adopts \textit{Full-Rank Aggregation with SVD Approximation} on the server side.
This strategy first averages the client updates in the full matrix space and then projects the aggregated update back to a rank-\(r\) LoRA form.
Specifically, the server first aggregates the effective update matrices from different clients:
\begin{flalign}
\Delta \boldsymbol{W}_s^{t+1}
=
\frac{1}{N}
\sum_{c\in C}
n_c
\boldsymbol{B}_c^t\boldsymbol{A}_c^t,
\end{flalign}
and then applies SVD approximation to obtain the server-side LoRA factors:
\begin{flalign}
(\boldsymbol{B}_s^{t+1}, \boldsymbol{A}_s^{t+1})
=
\operatorname{SVDApprox}(\Delta \boldsymbol{W}_s^{t+1}; r).
\end{flalign}

This design is important because LoRA uses a two-factor parameterization, where the effective update is represented as \(\boldsymbol{B}\boldsymbol{A}\).
Directly averaging the factors \(\boldsymbol{B}\) and \(\boldsymbol{A}\) across clients may be unsuitable when clients start from different client-specific initializations.
In FedSmoothLoRA, the Round-Matching matrix and the Gradient-Aligned matrix make the local LoRA factors client-specific.
As a result, different clients may learn LoRA factors that represent useful effective updates but are not necessarily aligned at the factor level.
Directly averaging these factors can therefore produce a poor aggregated adapter.
By contrast, Full-Rank Aggregation averages the effective update matrices before the SVD projection, producing a more consistent global update.

Table~\ref{tab:aggregation_round1} shows the accuracy before and after aggregation at the end of the first communication round.
The server model obtained by Full-Rank Aggregation achieves higher accuracy than that obtained by direct factor averaging.
Table~\ref{tab:aggregation_final} further reports the final CIFAR-100 results, where FedSmoothLoRA with Full-Rank Aggregation substantially outperforms the direct factor averaging variant.
These results confirm that the server aggregation strategy plays an important role in federated LoRA tuning.

\begin{table}[htbp]
    \centering
    \caption{
    Top-1 test accuracy before and after server aggregation at the end of round 1.
    The experiments are conducted on CIFAR-100 with the ViT-Small model. Results are from separate runs with different random seeds.
    }
    \resizebox{\linewidth}{!}{%
    \begin{tabular}{l|c|ccccc|c}
        \toprule
        \multirow{2}{*}{Method}
        & \multirow{2}{*}{Server Aggregation}
        & \multicolumn{5}{c|}{Before Aggregation}
        & \multicolumn{1}{c}{After Aggregation} \\
        \cmidrule(lr){3-7} \cmidrule(lr){8-8}
        &
        & Client 0
        & Client 1
        & Client 2
        & Client 3
        & Client 4
        & Server \\
        \midrule
        FedSmoothLoRA
        & Direct Factor Averaging
        & 26.66
        & 33.71
        & 30.05
        & 30.99
        & 30.96
        & 17.58 \\

        FedSmoothLoRA
        & Full-Rank Aggregation
        & 26.79
        & 34.29
        & 30.33
        & 31.46
        & 31.02
        & 27.49 \\
        \bottomrule
    \end{tabular}
    }
    \label{tab:aggregation_round1}
\end{table}

\begin{table}[htbp]
    \centering
    \caption{
    Comparison of different server aggregation strategies on CIFAR-100 with the ViT-Small model.
    The best results are highlighted in \textbf{bold}.
    }
    \begin{tabular}{l|ccc}
    \toprule
    Method
    & Server Aggregation
    & CIFAR-100 IID
    & CIFAR-100 Non-IID \\
    \midrule
    FedAvgLoRA
    & Direct Factor Averaging
    & 73.97{\tiny $\pm$ 0.11}
    & 57.36{\tiny $\pm$ 0.07} \\

    FedSmoothLoRA
    & Direct Factor Averaging
    & 74.92{\tiny $\pm$ 0.47}
    & 56.03{\tiny $\pm$ 0.21} \\

    FedSmoothLoRA
    & Full-Rank Aggregation
    & \textbf{84.07{\tiny $\pm$ 0.13}}
    & \textbf{64.46{\tiny $\pm$ 0.18}} \\
    \bottomrule
    \end{tabular}
    \label{tab:aggregation_final}
\end{table}
\section{Scalability Analysis}
\label{app:scalability}
In this section, we conduct a comprehensive scalability analysis of FedAvgLoRA, FRLoRA, and FedSmoothLoRA along four dimensions: model scale, LoRA rank, local steps, and the number of participating clients. Unless otherwise specified, all experiments are conducted on the math task using LLaMA-3.2-1B with LoRA rank $r{=}32$, 200 local steps, and 3 clients. The results are summarized in Figure~\ref{fig:scalability}. First, we extend the experiments to larger models, including LLaMA-3.2-3B, LLaMA-2-7B, and LLaMA-2-13B. As shown in Figure~\ref{fig:scalability}(a), FRLoRA and FedSmoothLoRA consistently outperform FedAvgLoRA across different model scales, demonstrating their practicality and scalability when applied to larger backbone models. Second, we vary the LoRA rank $r \in \{32, 64, 128\}$ to study the influence of adaptation capacity. As shown in Figure~\ref{fig:scalability}(b), increasing the rank generally improves performance, but it also introduces higher computational and communication costs. Under the same rank setting, both FRLoRA and FedSmoothLoRA achieve better performance than FedAvgLoRA, indicating that their improvements are not merely caused by using a larger parameter budget. Notably, FedSmoothLoRA with $r{=}32$ already outperforms FedAvgLoRA with $r{=}128$, further validating the effectiveness of the proposed optimization design under a smaller resource budget. Third, we evaluate different communication budgets by varying the number of communication steps in $\{100, 200, 400\}$. As shown in Figure~\ref{fig:scalability}(c), FRLoRA and FedSmoothLoRA consistently maintain clear advantages over FedAvgLoRA across all communication settings, suggesting that the proposed methods remain effective under both limited and relatively sufficient communication budgets. Finally, we increase the number of participating clients to $\{3, 6, 12\}$. As shown in Figure~\ref{fig:scalability}(d), FRLoRA and FedSmoothLoRA continue to outperform FedAvgLoRA as the federation scale increases, confirming their robustness in more realistic federated scenarios with greater client participation. Overall, these results show that the proposed methods can scale effectively across model size, adaptation rank, communication budget, and federation size, while FedSmoothLoRA achieves the most consistent performance gains among the compared methods.

\begin{figure*}[t]
    \centering
    \includegraphics[width=\textwidth]{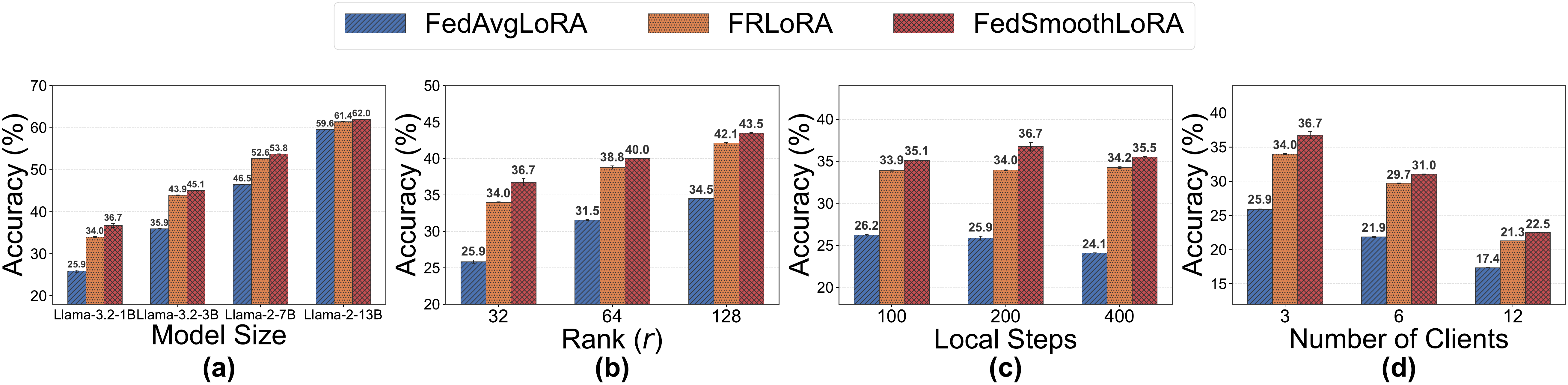}
    \caption{Scalability analysis of FedAvgLoRA, FRLoRA, and FedSmoothLoRA on the math task. We evaluate performance across four dimensions: (a)~model scale (LLaMA-3.2-1B, LLaMA-3.2-3B, LLaMA-2-7B, and LLaMA-2-13B), (b)~LoRA rank ($r \in \{32, 64, 128\}$), (c)~local steps ($\{100, 200, 400\}$), and (d)~number of participating clients ($\{3, 6, 12\}$). Unless otherwise specified, experiments use LLaMA-3.2-1B with $r{=}32$, 200 local steps, and 3 clients. The best results are highlighted in \textbf{bold}.}
    \label{fig:scalability}
\end{figure*}

\section{Limitations}
\label{app:limit}

Despite the promising results of \textbf{\texttt{FedSmoothLoRA}}, two key limitations remain:
(1) The current evaluations are limited to CIFAR-100, GSM8K, HumanEval, and Aya multilingual chat tasks. 
Although these benchmarks cover both image classification and natural language generation scenarios, further evaluations on more diverse benchmarks, task types, and real-world federated settings are necessary to comprehensively assess the generalization and robustness of FedSmoothLoRA.
(2) The current evaluations mainly focus on LLaMA-family models and do not include other representative model families, such as Qwen, Mistral, or Gemma. 
Future work will extend the evaluation to more diverse backbone model families to better understand the generality and robustness of FedSmoothLoRA across different model architectures.

\section{Compute Resources}
\label{app:resource}

All experiments are conducted using different hardware configurations depending on the task. 
For image classification experiments, we use a single NVIDIA RTX 3060 GPU with 12GB memory and an 8-core Intel Xeon E3-1231 v3 CPU @ 3.40GHz. 
For natural language generation experiments based on LLaMA-3.2-1B, we use a single NVIDIA A6000 GPU with 48GB memory and a 64-core Intel Xeon Gold 6226R CPU @ 2.90GHz. 
Experiments involving larger LLaMA-2-13B models are conducted on a single NVIDIA A800 GPU with 80GB memory and a 160-core Intel Xeon Platinum 8383C CPU @ 2.70GHz.

\section{Broader Impacts}
\label{app:broader}

We believe this work has the potential for a positive social impact. 
At present, adapting large-scale foundation models typically requires centralized access to substantial computational resources and large datasets. 
By integrating Federated Learning with LoRA, FedSmoothLoRA provides a promising alternative for settings where data are isolated and computational resources are limited. 
However, directly combining Federated Learning and LoRA still faces several challenges, including the limited update space, unstable training dynamics, and slow convergence.

FedSmoothLoRA addresses these issues by enlarging the effective update space, improving cross-round training continuity, and providing client-specific initialization for local adaptation. 
These improvements make federated LoRA tuning more practical in distributed and resource-constrained environments. 
By reducing unnecessary communication and improving convergence efficiency, FedSmoothLoRA may also help lower the cost of model adaptation and contribute to more efficient use of computational resources.

Furthermore, as foundation models continue to scale, it becomes increasingly difficult for individuals, small organizations, and institutions with limited resources to participate in model adaptation. 
FedSmoothLoRA enables more effective use of distributed data and computation while keeping raw data local, which may promote broader accessibility and support more decentralized development of large models.

Finally, FedSmoothLoRA does not eliminate all risks associated with federated model training or large model adaptation. 
Although federated learning reduces the need for direct data sharing, model updates may still leak sensitive information in some scenarios. 
In addition, models trained with FedSmoothLoRA may still produce hallucinated, biased, or misleading outputs. 
Therefore, practical deployment should be combined with appropriate privacy protection, safety evaluation, and output monitoring mechanisms.

\newpage
\section*{NeurIPS Paper Checklist}

\begin{enumerate}

\item {\bf Claims}
    \item[] Question: Do the main claims made in the abstract and introduction accurately reflect the paper's contributions and scope?
    \item[] Answer: \answerYes{} 
    \item[] Justification: The abstract and introduction clearly state the main limitations of FedAvgLoRA, including limited update space, inter-round state mismatch, and client-agnostic initialization, and present FedSmoothLoRA as a method designed to address these issues. The claimed contributions are supported by the proposed Round-Matching and Gradient-Aligned components and by empirical evaluations on vision and natural language generation tasks under both IID and Non-IID federated settings.
    \item[] Guidelines:
    \begin{itemize}
        \item The answer \answerNA{} means that the abstract and introduction do not include the claims made in the paper.
        \item The abstract and/or introduction should clearly state the claims made, including the contributions made in the paper and important assumptions and limitations. A \answerNo{} or \answerNA{} answer to this question will not be perceived well by the reviewers. 
        \item The claims made should match theoretical and experimental results, and reflect how much the results can be expected to generalize to other settings. 
        \item It is fine to include aspirational goals as motivation as long as it is clear that these goals are not attained by the paper. 
    \end{itemize}

\item {\bf Limitations}
    \item[] Question: Does the paper discuss the limitations of the work performed by the authors?
    \item[] Answer: \answerYes{} 
    \item[] Justification: The paper discusses limitations in Appendix~\ref{app:limit}, including the limited range of evaluated benchmarks and backbone model families.
    \begin{itemize}
        \item The answer \answerNA{} means that the paper has no limitation while the answer \answerNo{} means that the paper has limitations, but those are not discussed in the paper. 
        \item The authors are encouraged to create a separate ``Limitations'' section in their paper.
        \item The paper should point out any strong assumptions and how robust the results are to violations of these assumptions (e.g., independence assumptions, noiseless settings, model well-specification, asymptotic approximations only holding locally). The authors should reflect on how these assumptions might be violated in practice and what the implications would be.
        \item The authors should reflect on the scope of the claims made, e.g., if the approach was only tested on a few datasets or with a few runs. In general, empirical results often depend on implicit assumptions, which should be articulated.
        \item The authors should reflect on the factors that influence the performance of the approach. For example, a facial recognition algorithm may perform poorly when image resolution is low or images are taken in low lighting. Or a speech-to-text system might not be used reliably to provide closed captions for online lectures because it fails to handle technical jargon.
        \item The authors should discuss the computational efficiency of the proposed algorithms and how they scale with dataset size.
        \item If applicable, the authors should discuss possible limitations of their approach to address problems of privacy and fairness.
        \item While the authors might fear that complete honesty about limitations might be used by reviewers as grounds for rejection, a worse outcome might be that reviewers discover limitations that aren't acknowledged in the paper. The authors should use their best judgment and recognize that individual actions in favor of transparency play an important role in developing norms that preserve the integrity of the community. Reviewers will be specifically instructed to not penalize honesty concerning limitations.
    \end{itemize}

\item {\bf Theory assumptions and proofs}
    \item[] Question: For each theoretical result, does the paper provide the full set of assumptions and a complete (and correct) proof?
    \item[] Answer: \answerYes{}
    \item[] Justification: The paper provides a formal proposition on inter-round state-discrepancy decomposition in Appendix~\ref{app:inter_round_consistency}, together with the definitions, assumptions, and proof. The analysis is explicitly presented as a mechanistic characterization rather than a full convergence guarantee.
    \item[] Guidelines:
    \begin{itemize}
        \item The answer \answerNA{} means that the paper does not include theoretical results. 
        \item All the theorems, formulas, and proofs in the paper should be numbered and cross-referenced.
        \item All assumptions should be clearly stated or referenced in the statement of any theorems.
        \item The proofs can either appear in the main paper or the supplemental material, but if they appear in the supplemental material, the authors are encouraged to provide a short proof sketch to provide intuition. 
        \item Inversely, any informal proof provided in the core of the paper should be complemented by formal proofs provided in appendix or supplemental material.
        \item Theorems and Lemmas that the proof relies upon should be properly referenced. 
    \end{itemize}

    \item {\bf Experimental result reproducibility}
    \item[] Question: Does the paper fully disclose all the information needed to reproduce the main experimental results of the paper to the extent that it affects the main claims and/or conclusions of the paper (regardless of whether the code and data are provided or not)?
    \item[] Answer: \answerYes{}  
    \item[] Justification: The experimental hyperparameters can be found in Section~\ref{experiment}, specifically in the paragraphs beginning with “Experimental Setting.” Additional implementation details are provided in Appendix~\ref{app:exp_set}. We also include the code for our algorithm in the supplementary material to ensure reproducibility. 
    \item[] Guidelines:
    \begin{itemize}
        \item The answer \answerNA{} means that the paper does not include experiments.
        \item If the paper includes experiments, a \answerNo{} answer to this question will not be perceived well by the reviewers: Making the paper reproducible is important, regardless of whether the code and data are provided or not.
        \item If the contribution is a dataset and\slash or model, the authors should describe the steps taken to make their results reproducible or verifiable. 
        \item Depending on the contribution, reproducibility can be accomplished in various ways. For example, if the contribution is a novel architecture, describing the architecture fully might suffice, or if the contribution is a specific model and empirical evaluation, it may be necessary to either make it possible for others to replicate the model with the same dataset, or provide access to the model. In general. releasing code and data is often one good way to accomplish this, but reproducibility can also be provided via detailed instructions for how to replicate the results, access to a hosted model (e.g., in the case of a large language model), releasing of a model checkpoint, or other means that are appropriate to the research performed.
        \item While NeurIPS does not require releasing code, the conference does require all submissions to provide some reasonable avenue for reproducibility, which may depend on the nature of the contribution. For example
        \begin{enumerate}
            \item If the contribution is primarily a new algorithm, the paper should make it clear how to reproduce that algorithm.
            \item If the contribution is primarily a new model architecture, the paper should describe the architecture clearly and fully.
            \item If the contribution is a new model (e.g., a large language model), then there should either be a way to access this model for reproducing the results or a way to reproduce the model (e.g., with an open-source dataset or instructions for how to construct the dataset).
            \item We recognize that reproducibility may be tricky in some cases, in which case authors are welcome to describe the particular way they provide for reproducibility. In the case of closed-source models, it may be that access to the model is limited in some way (e.g., to registered users), but it should be possible for other researchers to have some path to reproducing or verifying the results.
        \end{enumerate}
    \end{itemize}

\item {\bf Open access to data and code}
    \item[] Question: Does the paper provide open access to the data and code, with sufficient instructions to faithfully reproduce the main experimental results, as described in supplemental material?
    \item[] Answer: \answerYes{}
    \item[] Justification:  We include the code for our algorithm in the supplementary material to ensure reproducibility. All datasets used in our experiments are open-source and have been properly declared and cited in Section~\ref{experiment}.

    \item[] Guidelines:
    \begin{itemize}
        \item The answer \answerNA{} means that paper does not include experiments requiring code.
        \item Please see the NeurIPS code and data submission guidelines (\url{https://neurips.cc/public/guides/CodeSubmissionPolicy}) for more details.
        \item While we encourage the release of code and data, we understand that this might not be possible, so \answerNo{} is an acceptable answer. Papers cannot be rejected simply for not including code, unless this is central to the contribution (e.g., for a new open-source benchmark).
        \item The instructions should contain the exact command and environment needed to run to reproduce the results. See the NeurIPS code and data submission guidelines (\url{https://neurips.cc/public/guides/CodeSubmissionPolicy}) for more details.
        \item The authors should provide instructions on data access and preparation, including how to access the raw data, preprocessed data, intermediate data, and generated data, etc.
        \item The authors should provide scripts to reproduce all experimental results for the new proposed method and baselines. If only a subset of experiments are reproducible, they should state which ones are omitted from the script and why.
        \item At submission time, to preserve anonymity, the authors should release anonymized versions (if applicable).
        \item Providing as much information as possible in supplemental material (appended to the paper) is recommended, but including URLs to data and code is permitted.
    \end{itemize}

\item {\bf Experimental setting/details}
    \item[] Question: Does the paper specify all the training and test details (e.g., data splits, hyperparameters, how they were chosen, type of optimizer) necessary to understand the results?
    \item[] Answer: \answerYes{} 
    \item[] Justification: The paper specifies the datasets, client partitions, model backbones, LoRA ranks, scaling factors, optimizers, learning-rate schedules, batch sizes, local training steps, and evaluation protocols in Section~\ref{experiment} and Appendix~\ref{app:exp_set}.
    \item[] Guidelines:
    \begin{itemize}
        \item The answer \answerNA{} means that the paper does not include experiments.
        \item The experimental setting should be presented in the core of the paper to a level of detail that is necessary to appreciate the results and make sense of them.
        \item The full details can be provided either with the code, in appendix, or as supplemental material.
    \end{itemize}

\item {\bf Experiment statistical significance}
    \item[] Question: Does the paper report error bars suitably and correctly defined or other appropriate information about the statistical significance of the experiments?
    \item[] Answer: \answerYes{} 
    \item[] Justification: We report standard deviations together with the main evaluation results in Tables~\ref{tab:cifar100_results}, \ref{tab:math_code_results}, \ref{tab:chat_results}, \ref{tab:3}, \ref{tab:6}, and \ref{tab:7}. These standard deviations summarize variability across repeated runs under the same experimental settings.
    \item[] Guidelines:
    \begin{itemize}
        \item The answer \answerNA{} means that the paper does not include experiments.
        \item The authors should answer \answerYes{} if the results are accompanied by error bars, confidence intervals, or statistical significance tests, at least for the experiments that support the main claims of the paper.
        \item The factors of variability that the error bars are capturing should be clearly stated (for example, train/test split, initialization, random drawing of some parameter, or overall run with given experimental conditions).
        \item The method for calculating the error bars should be explained (closed form formula, call to a library function, bootstrap, etc.)
        \item The assumptions made should be given (e.g., Normally distributed errors).
        \item It should be clear whether the error bar is the standard deviation or the standard error of the mean.
        \item It is OK to report 1-sigma error bars, but one should state it. The authors should preferably report a 2-sigma error bar than state that they have a 96\% CI, if the hypothesis of Normality of errors is not verified.
        \item For asymmetric distributions, the authors should be careful not to show in tables or figures symmetric error bars that would yield results that are out of range (e.g., negative error rates).
        \item If error bars are reported in tables or plots, the authors should explain in the text how they were calculated and reference the corresponding figures or tables in the text.
    \end{itemize}

\item {\bf Experiments compute resources}
    \item[] Question: For each experiment, does the paper provide sufficient information on the computer resources (type of compute workers, memory, time of execution) needed to reproduce the experiments?
    \item[] Answer: \answerYes{} 
    \item[] Justification: We discuss the computational resources used in our experiments in Appendix~\ref{app:resource}.
 
    \item[] Guidelines:
    \begin{itemize}
        \item The answer \answerNA{} means that the paper does not include experiments.
        \item The paper should indicate the type of compute workers CPU or GPU, internal cluster, or cloud provider, including relevant memory and storage.
        \item The paper should provide the amount of compute required for each of the individual experimental runs as well as estimate the total compute. 
        \item The paper should disclose whether the full research project required more compute than the experiments reported in the paper (e.g., preliminary or failed experiments that didn't make it into the paper). 
    \end{itemize}
    
\item {\bf Code of ethics}
    \item[] Question: Does the research conducted in the paper conform, in every respect, with the NeurIPS Code of Ethics \url{https://neurips.cc/public/EthicsGuidelines}?
    \item[] Answer: \answerYes{} 
    \item[] Justification: This paper conforms to the NeurIPS Code of Ethics.

    \item[] Guidelines:
    \begin{itemize}
        \item The answer \answerNA{} means that the authors have not reviewed the NeurIPS Code of Ethics.
        \item If the authors answer \answerNo, they should explain the special circumstances that require a deviation from the Code of Ethics.
        \item The authors should make sure to preserve anonymity (e.g., if there is a special consideration due to laws or regulations in their jurisdiction).
    \end{itemize}

\item {\bf Broader impacts}
    \item[] Question: Does the paper discuss both potential positive societal impacts and negative societal impacts of the work performed?
    \item[] Answer: \answerYes{} 
    \item[] Justification: We have discussed this in Appendix~\ref{app:broader}.    \item[] Guidelines:
        \begin{itemize}
        \item The answer \answerNA{} means that there is no societal impact of the work performed.
        \item If the authors answer \answerNA{} or \answerNo, they should explain why their work has no societal impact or why the paper does not address societal impact.
        \item Examples of negative societal impacts include potential malicious or unintended uses (e.g., disinformation, generating fake profiles, surveillance), fairness considerations (e.g., deployment of technologies that could make decisions that unfairly impact specific groups), privacy considerations, and security considerations.
        \item The conference expects that many papers will be foundational research and not tied to particular applications, let alone deployments. However, if there is a direct path to any negative applications, the authors should point it out. For example, it is legitimate to point out that an improvement in the quality of generative models could be used to generate Deepfakes for disinformation. On the other hand, it is not needed to point out that a generic algorithm for optimizing neural networks could enable people to train models that generate Deepfakes faster.
        \item The authors should consider possible harms that could arise when the technology is being used as intended and functioning correctly, harms that could arise when the technology is being used as intended but gives incorrect results, and harms following from (intentional or unintentional) misuse of the technology.
        \item If there are negative societal impacts, the authors could also discuss possible mitigation strategies (e.g., gated release of models, providing defenses in addition to attacks, mechanisms for monitoring misuse, mechanisms to monitor how a system learns from feedback over time, improving the efficiency and accessibility of ML).
    \end{itemize}
    
\item {\bf Safeguards}
    \item[] Question: Does the paper describe safeguards that have been put in place for responsible release of data or models that have a high risk for misuse (e.g., pre-trained language models, image generators, or scraped datasets)?
    \item[] Answer: \answerNA{} 
    \item[] Justification: The paper does not introduce or release a new high-risk dataset or pre-trained generative model. It studies a federated LoRA tuning algorithm using existing datasets and backbone models, so this question is not directly applicable.
    \item[] Guidelines:
    \begin{itemize}
        \item The answer \answerNA{} means that the paper poses no such risks.
        \item Released models that have a high risk for misuse or dual-use should be released with necessary safeguards to allow for controlled use of the model, for example by requiring that users adhere to usage guidelines or restrictions to access the model or implementing safety filters. 
        \item Datasets that have been scraped from the Internet could pose safety risks. The authors should describe how they avoided releasing unsafe images.
        \item We recognize that providing effective safeguards is challenging, and many papers do not require this, but we encourage authors to take this into account and make a best faith effort.
    \end{itemize}

\item {\bf Licenses for existing assets}
    \item[] Question: Are the creators or original owners of assets (e.g., code, data, models), used in the paper, properly credited and are the license and terms of use explicitly mentioned and properly respected?
    \item[] Answer: \answerYes{} 
    \item[] Justification: All models and datasets used in our paper are properly cited.
    \item[] Guidelines:
    \begin{itemize}
        \item The answer \answerNA{} means that the paper does not use existing assets.
        \item The authors should cite the original paper that produced the code package or dataset.
        \item The authors should state which version of the asset is used and, if possible, include a URL.
        \item The name of the license (e.g., CC-BY 4.0) should be included for each asset.
        \item For scraped data from a particular source (e.g., website), the copyright and terms of service of that source should be provided.
        \item If assets are released, the license, copyright information, and terms of use in the package should be provided. For popular datasets, \url{paperswithcode.com/datasets} has curated licenses for some datasets. Their licensing guide can help determine the license of a dataset.
        \item For existing datasets that are re-packaged, both the original license and the license of the derived asset (if it has changed) should be provided.
        \item If this information is not available online, the authors are encouraged to reach out to the asset's creators.
    \end{itemize}

\item {\bf New assets}
    \item[] Question: Are new assets introduced in the paper well documented and is the documentation provided alongside the assets?
    \item[] Answer: \answerYes{} 
    \item[] Justification: Instructions for running our code, including environment setup, usage, and hyperparameter settings, are provided in the README.md file.
    \item[] Guidelines:
    \begin{itemize}
        \item The answer \answerNA{} means that the paper does not release new assets.
        \item Researchers should communicate the details of the dataset\slash code\slash model as part of their submissions via structured templates. This includes details about training, license, limitations, etc. 
        \item The paper should discuss whether and how consent was obtained from people whose asset is used.
        \item At submission time, remember to anonymize your assets (if applicable). You can either create an anonymized URL or include an anonymized zip file.
    \end{itemize}

\item {\bf Crowdsourcing and research with human subjects}
    \item[] Question: For crowdsourcing experiments and research with human subjects, does the paper include the full text of instructions given to participants and screenshots, if applicable, as well as details about compensation (if any)? 
    \item[] Answer: \answerNA{} 
    \item[] Justification: No human subjects or participants involved in our research.
    \item[] Guidelines:
    \begin{itemize}
        \item The answer \answerNA{} means that the paper does not involve crowdsourcing nor research with human subjects.
        \item Including this information in the supplemental material is fine, but if the main contribution of the paper involves human subjects, then as much detail as possible should be included in the main paper. 
        \item According to the NeurIPS Code of Ethics, workers involved in data collection, curation, or other labor should be paid at least the minimum wage in the country of the data collector. 
    \end{itemize}

\item {\bf Institutional review board (IRB) approvals or equivalent for research with human subjects}
    \item[] Question: Does the paper describe potential risks incurred by study participants, whether such risks were disclosed to the subjects, and whether Institutional Review Board (IRB) approvals (or an equivalent approval/review based on the requirements of your country or institution) were obtained?
        \item[] Answer: \answerNA{} 
    \item[] Justification: The paper does not involve crowdsourcing nor research with human subjects.

    \item[] Guidelines:
    \begin{itemize}
        \item The answer \answerNA{} means that the paper does not involve crowdsourcing nor research with human subjects.
        \item Depending on the country in which research is conducted, IRB approval (or equivalent) may be required for any human subjects research. If you obtained IRB approval, you should clearly state this in the paper. 
        \item We recognize that the procedures for this may vary significantly between institutions and locations, and we expect authors to adhere to the NeurIPS Code of Ethics and the guidelines for their institution. 
        \item For initial submissions, do not include any information that would break anonymity (if applicable), such as the institution conducting the review.
    \end{itemize}

\item {\bf Declaration of LLM usage}
    \item[] Question: Does the paper describe the usage of LLMs if it is an important, original, or non-standard component of the core methods in this research? Note that if the LLM is used only for writing, editing, or formatting purposes and does \emph{not} impact the core methodology, scientific rigor, or originality of the research, declaration is not required.
    \item[] Answer: \answerNA{}
    \item[] Justification: The core method development does not use LLMs as an important, original, or non-standard component. LLaMA-family models are used only as experimental backbone models, with details provided in Section~\ref{experiment} and Appendix~\ref{app:exp_set}.
 
    \item[] Guidelines:
    \begin{itemize}
        \item The answer \answerNA{} means that the core method development in this research does not involve LLMs as any important, original, or non-standard components.
        \item Please refer to our LLM policy in the NeurIPS handbook for what should or should not be described.
    \end{itemize}

\end{enumerate}

\end{document}